\newtheorem{theorem}{Theorem}
\newtheorem{lemma}[theorem]{Lemma}
\def\sub#1{_{\rm #1}}
\def\eg{{\it e.g.}}
\def\ie{{\it i.e.}}
\ifcvprfinal\pagestyle{empty}\fi
\begin{document}

\title{Decentralized Learning of Generative Adversarial Networks from Non-iid Data}

\author{Ryo Yonetani\\
OMRON SINIC X\\
Tokyo, Japan\\
{\tt\small ryo.yonetani@sinicx.com}
\and
  Tomohiro Takahashi\\
  OMRON Corporation\\
  Kyoto, Japan\\
{\tt\small tomohiro.takahashi.2@omron.com}
\and
  Atsushi Hashimoto\\
  OMRON SINIC X\\
  Tokyo, Japan\\
{\tt\small atsushi.hashimoto@sinicx.com}
\and
  Yoshitaka Ushiku\\
  OMRON SINIC X\\
  Tokyo, Japan\\
{\tt\small yoshitaka.ushiku@sinicx.com}
}

\maketitle

\begin{abstract}
This work addresses a new problem that learns generative adversarial networks (GANs) from multiple data collections that are each i) owned separately by different clients and ii) drawn from a non-identical distribution that comprises different classes. Given such non-iid data as input, we aim to learn a distribution involving all the classes input data can belong to, while keeping the data decentralized in each client storage. Our key contribution to this end is a new decentralized approach for learning GANs from non-iid data called Forgiver-First Update (F2U), which a) asks clients to train an individual discriminator with their own data and b) updates a generator to fool the most `forgiving' discriminators who deem generated samples as the most real. Our theoretical analysis proves that this updating strategy allows the decentralized GAN to achieve a generator's distribution with all the input classes as its global optimum based on f-divergence minimization. Moreover, we propose a relaxed version of F2U called Forgiver-First Aggregation (F2A) that performs well in practice, which adaptively aggregates the discriminators while emphasizing forgiving ones. Our empirical evaluations with image generation tasks demonstrated the effectiveness of our approach over state-of-the-art decentralized learning methods.
\end{abstract}

\section{Introduction}
\label{sec:intro}

Large-scale datasets as well as high-performance computational resources are arguably vital for training many of the state-of-the-art deep learning models. Typically, such datasets have been curated from publicly available data, \eg,~\cite{Sami2016,Russakovsky2015a}, and trained in a single workstation or a well-organized computer cluster. At the same time, increasing attention is being paid to \emph{decentralized learning}, where multiple clients collaboratively utilize their local data resources and computational resources to enable large-scale training, while the data are kept decentralized in each client storage. Unlike much related work focusing on supervised decentralized learning, this work will address an unsupervised task, more specifically, learning generative adversarial networks (GANs)~\cite{Goodfellow2014} from decentralized data.

\begin{figure}[t]
  \begin{center}
    \includegraphics[width=\linewidth]{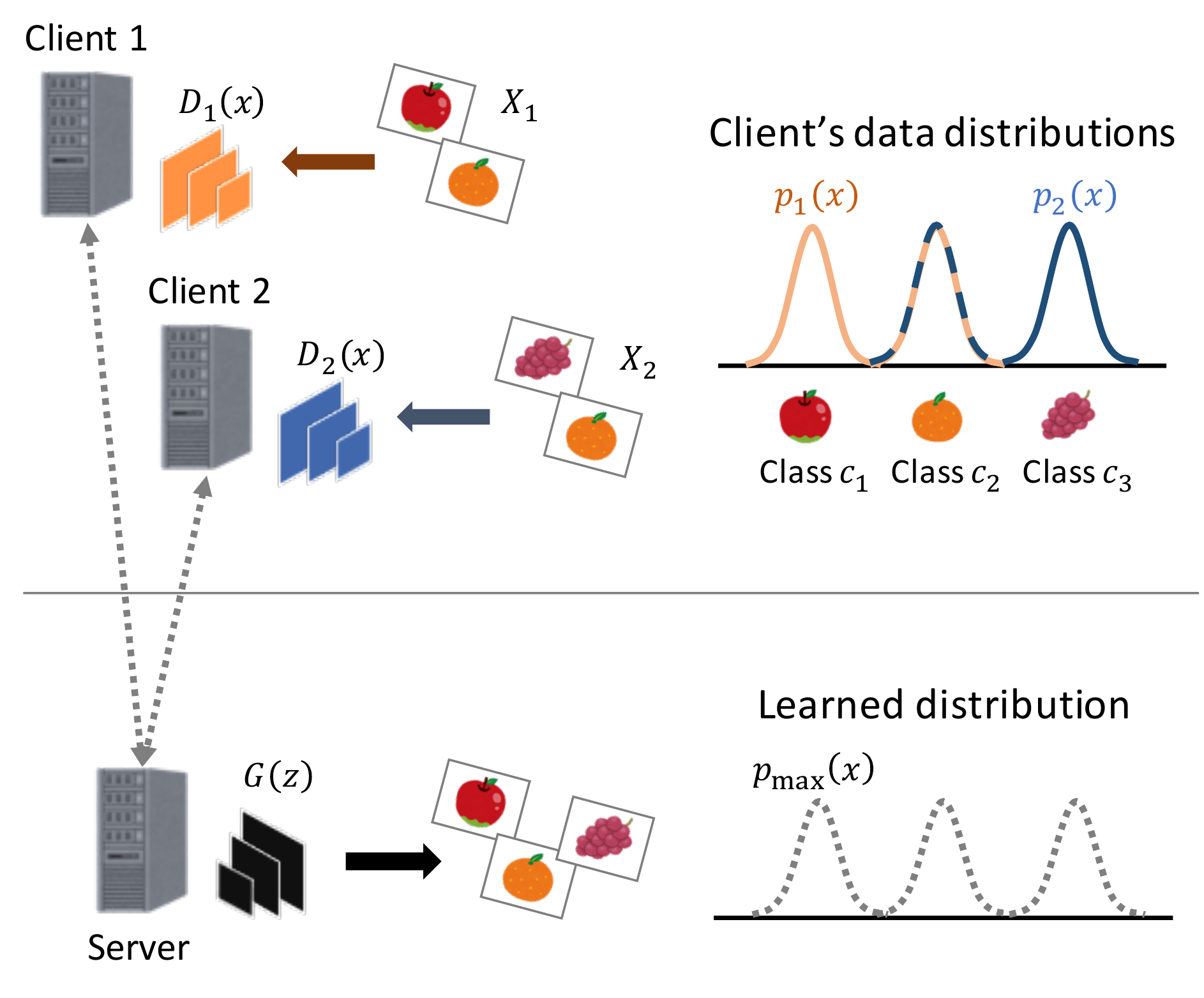}
    \caption{\textcolor{black}{\textbf{Decentralized Learning of GANs}. Individual clients each have private data collections $X_i$ drawn from non-identical distributions $p_i(x)$ that comprise different classes (\eg, classes $\{c_1,c_2\}$ in $X_1$ and $\{c_2,c_3\}$ in $X_2$). By communicating with the clients, the server tries to learn a distribution involving all the input classes (\ie, $\{c_1,c_2,c_3\}$), while keeping $X_1$ and $X_2$ decentralized in each client storage.}}
    \label{fig:teaser}
  \end{center}
\end{figure}

\textcolor{black}{Particularly, we are interested in learning a generative model from multiple image data collections that are each i) owned separately by different clients, and ii) drawn from non-identical data-generating distributions that comprise different classes (\eg, image categories; see also Figure~\ref{fig:teaser}). Given such non-iid data collections as input, we aim to achieve a generative model of a distribution that involves \emph{all the classes input data can belong to, while keeping all the data client-side during the training}. Examples of data domains that would benefit from such unsupervised decentralized learning include life-logging videos~\cite{Chowdhury2016}, biological data~\cite{Ching2018}, and medical data~\cite{Li2005}, which are all obtained independently by multiple individuals or institutions and characterized differently due to geographical conditions or personal preferences. On the one hand, learning from non-iid data resources distributed all over the world would improve the diversity of images the trained model can generate and ultimately enhance various vision-based applications. For example, unsupervised anomaly detection~\cite{Schlegl2017}, if learned from client's data captured under various conditions, can utilizes a distribution of diverse `normal' samples for reliably detecting anomalies. On the other hand, direct access to a collection of data owned by each client may not always be possible due to prohibitively large dataset sizes or privacy concerns (\eg, a living area could be inferred from the statistics of a life-logging video stream taken by a single person). This dilemma makes it hard to aggregate all the client data in a central server, necessitating decentralized learning approaches.}

Nevertheless, it is harder to determine how supervised decentralized learning, which has been extensively studied~\cite{Bonawitz2019,Jiang2017,Lian2017,Wen2017}, can be adopted for learning generative models from decentralized non-iid data. An exception proposed recently is decentralized learning of GANs~\cite{Hardy2018}, which lets each client train an individual discriminator with their own data while asking a central server to update a generator to fool those discriminators. While allowing clients to decentralize their data in their own storage, this approach restricts all client data to be iid, and otherwise has no theoretical guarantee on what distribution will be learned. Consequently, little work has been done on the decentralized learning of generative models from non-iid data, despite the data non-iidness is one of the key properties in a practical setting of decentralized learning~\cite{Mcmahan2017}.

Given this background, we propose a new unsupervised decentralized approach for \emph{learning GANs from non-iid data}, which we refer to as \textbf{Forgiver-First Update (F2U)}. Specifically, given multiple discriminators each trained by different clients with non-identical distributions, hereafter $p_i(x)$, F2U allows a generator to learn $p\sub{max}(x)=\frac{1}{Z}\max_ip_i(x)$ ($Z$ is the normalizing constant) that comprises all the input classes including rare ones observed only by a small fraction of the clients as well as common ones shared by many. Our theoretical analysis based on $f$-divergence minimization proves that $p\sub{max}(x)$ can be achieved as the decentralized GAN's global optimum by letting the generator fool the most `forgiving' discriminators for each generated sample, who deemed the sample as the most real and closest to what they own.

Moreover, we present a relaxed version of F2U called \textbf{Forgiver-First Aggregation (F2A)}. Instead of selecting the most forgiving discriminators, F2A adaptively aggregates judgments of discriminators made to generated samples, while emphasizing those from more forgiving ones, and updates the generator with the aggregated judgments. While sacrificing the theoretical guarantee, F2A often performs better than F2U in practice. Technically, the adaptive aggregation is done by a regularized weighted averaging whose weights are also updated via back-propagation, allowing the generator to better capture the non-iidness of input data. 

We empirically evaluated our approach with image generation tasks on several public image datasets. The experimental results demonstrate that the decentralized GANs trained with F2U and F2A clearly outperformed several state-of-the-art approaches~\cite{Durugkar2016,Hardy2018}.


\subsection*{Related Work}

\textcolor{black}{In this work, we are interested in learning GANs from decentralized data that are owned separately by multiple clients. This problem setting gives a different assumption about data distributions compared to a standard distributed learning paradigm based on a data parallelism scheme. On the one hand, data-parallel distributed learning typically partitions a large-scale data into smaller mini-batches and distributes them to multiple client machines, so that each mini-batch implicitly follows the same data-generating distribution (see also \cite{Tal2018,Lian2017}). On the other hand, the data that \emph{the clients already own} can be highly dissimilar to each other depending on how they were collected, which we refer to as data non-iidness. The most relevant approach is federated learning~\cite{Bonawitz2019,Mcmahan2017} that addressed the problem of learning from such non-iid client data. More recent work has tried to make federated learning more communication efficient~\cite{Jeong2018,Konecny2016, Lin2017}, secure~\cite{Bagdasaryan2018,Bonawitz2017}, and applicable to a practical wireless setting~\cite{Giannakis2016,Wang2018}, but under the setting of standard supervised learning. As we stated earlier, one exception presented recently is decentralized learning of GANs~\cite{Hardy2018}, which however worked only when client data were iid. To the best of our knowledge, our work is the first to address the problem of unsupervised decentralized learning from non-iid data collections.}

In terms of the formulation of GANs, recent work has also tried to involve multiple discriminators and/or multiple generators. The motivations behind such works are, however, not to enable decentralized learning for multi-client data but to stabilize the training process~\cite{Chavdarova2018,Durugkar2016}, to avoid mode collapses~\cite{Ghosh2018,Nguyen2017}, or to model multi-domain data~\cite{Choi2018,Liu2016,Zhu2017}, with the centralized setting.

\section{Preliminaries}
\label{sec:proposed}

\paragraph{Problem Setting.}
Consider $N$ clients who each have their own private data collection $X_i=\{x\mid x\sim p_i\}$ drawn from a hidden, non-identical data-generating distribution $p_i(x)$ that comprises different disjoint classes (\eg, classes $\{c_1, c_2\}$ in $X_1$ and $\{c_2,c_3\}$ in $X_2$ as shown in Figure~\ref{fig:teaser}). Given non-iid data collections $\mathcal{X}=\{X_i\mid i=1,\dots,N\}$ as input, we address the problem of learning a GAN with the generator's distribution $p\sub{g}(x)$ given by:
\begin{eqnarray}
p\sub{max}(x)&=&\frac{1}{Z}\max_ip_i(x),\\
Z&=&\int_x\max_ip_i(x)\mathrm{d}x,
\end{eqnarray}
\emph{while keeping $\mathcal{X}$ decentralized such that each $X_i$ is available only for the $i$-th client}. The distribution $p\sub{max}(x)$ contains all the classes that $\mathcal{P}=\{p_1,\dots,p_N\}$ collectively have (\ie, $\{c_1, c_2, c_3\}$ in Figure~\ref{fig:teaser})\footnote{More generally, consider a set of $K$ disjoint classes $\Pi=\left\{c_1, c_2,\dots,c_K\right\}$ each of which has its own data-generating distribution $\rho_k(x)\in [0, 1],\;\int_x\rho_k(x)\textrm{d}x=1$. Then we denote by $\Pi_i\subset \Pi$, a subset of classes that $p_i(x)$ comprises, and define $p_i(x)=\sum_{k=1}^{K}w_i(k)\rho_k(x)$ where $w_i(k)\in[0,1]$ is a class prior that satisfies i) $\sum_k w_i(k)=1$, and ii) $w_i(k)>0$ if $c_k \in\Pi_i$ and $w_i(k)=0$ otherwise. Because we assume the classes to be disjoint, $\max_ip_i(x)=\sum_k\max_iw_i(k)\rho_k(x)=\sum_kw\sub{max}(k)\rho_k(x)$ where $w\sub{max}(k)=\max_iw_i(k)>0$ if $\exists \Pi_i,\;c_k\in\Pi_i$ and $w\sub{max}(k)=0$ otherwise. Namely, $p\sub{max}$ comprises all the classes $\mathcal{P}$ collectively have.}. Compared to other possible distributions that could be learned from $\mathcal{P}$, such as $\frac{1}{N}\sum_i p_i(x)$ and $\frac{1}{Z'}\min_i p_i(x)$ where $Z'=\int_x\min_ip_i(x)\textrm{d}x$, learning a generator for $p\sub{max}(x)$ is advantageous when we aim to generate diverse samples including those of rare classes observed only by a small fraction of $\mathcal{P}$ as well as common ones shared by many. 

\paragraph{Generative Adversarial Networks.}
As a preliminary, let us briefly introduce a formulation for training GANs with the centralized setting. Namely, we assume that data sample $x$ drawn from data-generating distribution $p\sub{data}(x)$ can be accessed in a single place without any restriction. Typically, GANs consist of a generator $G$ and a discriminator $D$ which work as follows: $G$ takes as input a noise vector $z$ drawn from a normal distribution $p\sub{z}(z)$ to generate a realistic sample $G(z)$, which is ideally a generative model of a distribution $p\sub{data}(x)$. $D$ receives either real samples $x \sim p\sub{data}$ or generated ones $G(z)$ to discriminate them.

Training of GANs proceeds based on the competition between $G$ and $D$; they are coupled and updated by minimizing the following two objective functions alternately:
\begin{eqnarray}
    \mathcal{L}_D=\mathop{\mathbb{E}}_{x\sim p\sub{data}}\left[l\left(D(x), y\sub{r}\right)\right]  + \mathop{\mathbb{E}}_{z\sim p\sub{z}}\left[l\left(D(G(z)), y\sub{f}\right)\right],
    \label{eq:centralized_d}\\
    \mathcal{L}\sub{G}=\mathop{\mathbb{E}}_{x\sim p\sub{data}}\left[l\left(D(x), y\sub{r'}\right)\right]+\mathop{\mathbb{E}}_{z\sim p\sub{z}}\left[l\left(D(G(z)), y\sub{r'}\right)\right],
    \label{eq:centralized_g}
\end{eqnarray}
where $\mathcal{L}\sub{G}$ can also be represented by
\begin{equation}
   \mathcal{L}\sub{G}=\mathop{\mathbb{E}}_{x\sim p\sub{data}}\left[l\left(D(x), y\sub{r'}\right)\right]+\mathop{\mathbb{E}}_{x\sim p\sub{g}}\left[l\left(D(x), y\sub{r'}\right)\right],
\end{equation}
with generator's distribution $p\sub{g}$. $l$ is defined differently for the choice of loss functions such as binary cross entropy \cite{Goodfellow2014} and mean squared error \cite{Mao2017}, measuring how judgments of $D$ are different from labels $y\sub{r}, y\sub{r'}$ (real) or $y\sub{f}$ (fake). Given mini-batches of $x$ and $z$, $D$ is updated with Eq.~(\ref{eq:centralized_d}) while $G$ fixed to detect generated samples more accurately. $G$ is updated with Eq.~(\ref{eq:centralized_g}) while fixing $D$ to generate more realistic samples that are more likely to fool $D$.

\paragraph{Decentralized Learning of GANs.}
Now we consider our main problem: decentralized learning of GANs from non-iid data collections. Following the basic idea of Multi-Discriminator GAN (MD-GAN) proposed in \cite{Hardy2018}, we consider the existence of a \emph{server} that collaborates with the clients to learn GANs. Under the decentralized setting, it is reasonable to ask each client to learn its own discriminator $D_i$ with $X_i$ and to inform the server how $D_i$ judges generated samples $G(z)$ instead of directly sharing $X_i$. Then the server can update the generator $G$ to fool $D_1,\dots,D_N$.

Within this approach, our main technical question is that, \emph{``how can judgments $D_i(G(z))$ be taken into account by $G$ to learn $p\sub{max}(x)$, especially when $p_i(x)$s are non-identical?''} Unfortunately, this question remains unsolved in MD-GAN. As we introduced in Section~\ref{sec:intro}, it assumes all of the input data collections to be iid, and has no theoretical guarantee on what will be learned as the generator's distribution $p\sub{g}$ otherwise. Other relevant GANs using multiple discriminators, such as Generative Multi-Adversarial Networks (GMAN)~\cite{Durugkar2016}, also assume all the data samples to be drawn from the same distribution, making it difficult to address the data non-iidness.


\section{Proposed Approach}
In this section, we present i) \textbf{Forgiver-First Update (F2U)} that is proven to achieve $p\sub{g}=p\sub{max}$ as the global optimum of decentralized GAN with multiple discriminators; and ii) \textbf{Forgiver-First Aggregation (F2A)} that can work well in practice.

\subsection{Forgiver-First Update}
\label{subsec:F2U}

\begin{figure}[t]
  \begin{center}
    \includegraphics[width=\linewidth]{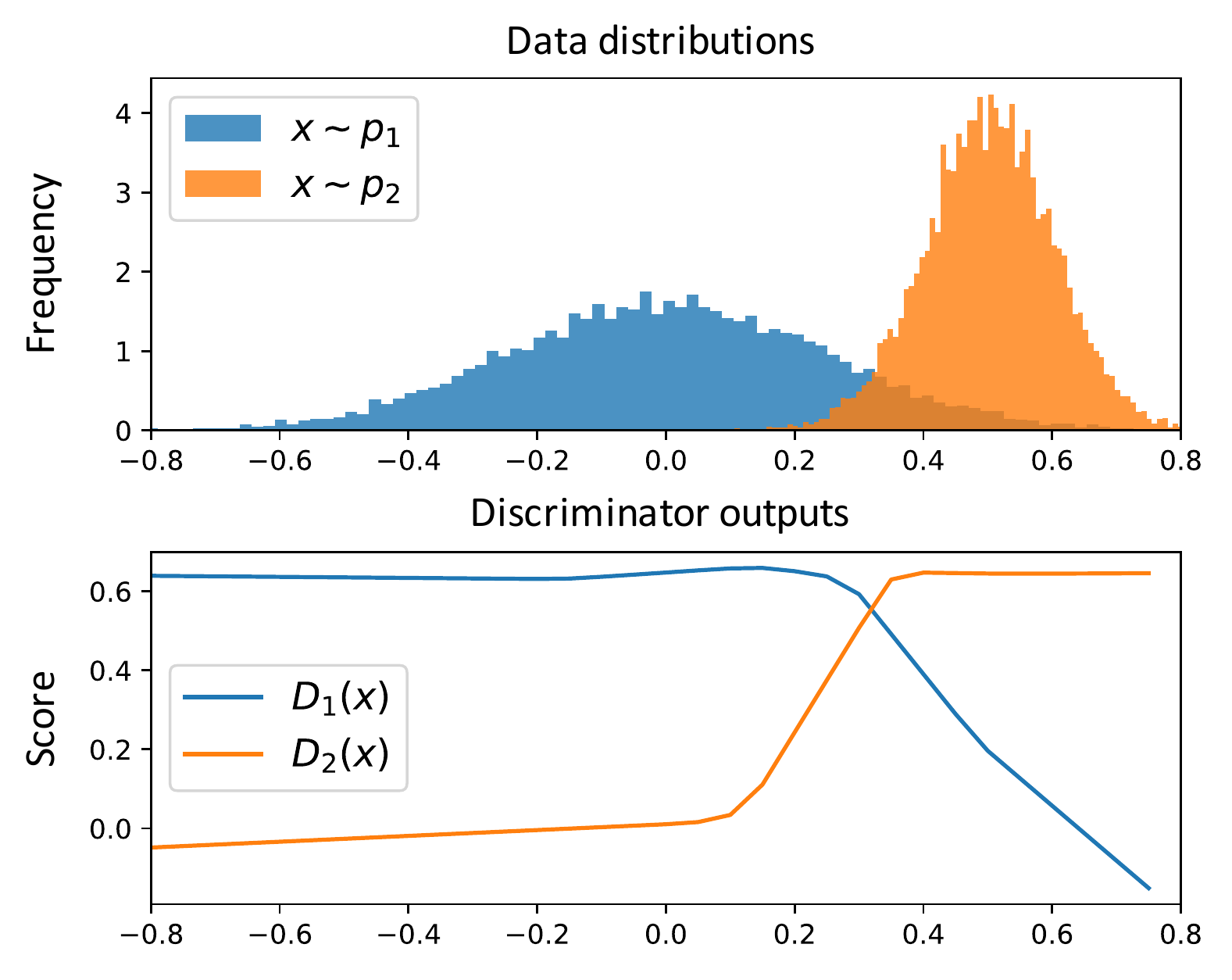}
    \caption{\textcolor{black}{\textbf{Two-Class Non-iid Example.} Client data distributions (top) and outputs of the corresponding discriminators trained with our approach (bottom). See Figure~\ref{fig:toy} for the learned distribution.}}
    \label{fig:overlap}
  \end{center}
\end{figure}

As its name implies, F2U asks generator $G$ to be updated against the discriminator who gives the most forgiving judgment, which can be described simply by:
\begin{equation}
    D\sub{max}(x)=\max_i D_i(x).
\end{equation}
To better understand this approach, Figure~\ref{fig:overlap} illustrates an example of two-client non-iid data each drawn from a distinct Gaussian distribution (top) and outputs of the corresponding client-wise discriminators trained with our approach (bottom). Our key insight is that, when client data are highly dissimilar between each other, the discriminators will judge each sample differently depending on where the sample is, such as $D_1(x)\gg D_2(x)$ if $p_1(x) \gg p_2(x)$ and $D_1(x) \ll D_2(x)$ if $p_1(x)\ll p_2(x)$. Accordingly, selecting $\max_i D_i(x)$ to update $G$ intuitively means selecting $\max_ip_i(x)$ as the data distribution that $G$ will learn.

Below we prove that our decentralized GAN achieves $p\sub{g}=p\sub{max}$ as the global optimum if $G$ is updated with $\max_iD_i(x)$. Here we focus on a typical setting of least-square GANs (LSGANs)~\cite{Mao2017} that will be used in our experiments, where $l$ is defined by the mean-squared error, $y\sub{r}, y\sub{r'}=1$, and $y\sub{f}=0$ (another case with the standard GAN is also present in the supplementary material).

As shown in \cite{Mao2017}, the optimal discriminator given data-generating distribution $p_i(x)$ and generator's distribution $p\sub{g}(x)$ and $y\sub{r}=1, y\sub{f}=0$ is:
\begin{equation}
    D_i^*(x)=\frac{p_i(x)}{p_i(x) + p\sub{g}(x)}.
    \label{eq:optimal_d}
\end{equation}
This leads to the following lemma.
\begin{lemma}
If each $D_i(x)$ is trained optimally from data-generating distribution $p_i(x)$ and generator's distribution $p\sub{g}(x)$, $D^*\sub{max}(x)=\max_i D^*_i(x)$ can be regarded as the optimal discriminator trained from $p\sub{max}(x)$, \ie, $D^*\sub{max}(x)=\frac{p\sub{max}(x)}{p\sub{max}(x) +\alpha p\sub{g}(x)}$ where $\alpha$ is a positive constant.
\label{proof:lemma1}
\end{lemma}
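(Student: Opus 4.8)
The plan is to exploit the fact that the optimal single-client discriminator in Eq.~(\ref{eq:optimal_d}) is a strictly increasing function of the client's density at the point $x$, so that taking the pointwise maximum over $i$ simply selects the client with the largest $p_i(x)$. Concretely, first I would rewrite $D_i^*(x)=\frac{p_i(x)}{p_i(x)+p\sub{g}(x)}=\bigl(1+p\sub{g}(x)/p_i(x)\bigr)^{-1}$ for each $x$ with $p\sub{g}(x)>0$, and observe that for fixed $p\sub{g}(x)$ the map $t\mapsto \frac{t}{t+p\sub{g}(x)}$ is monotonically increasing in $t\ge 0$. Hence $\arg\max_i D_i^*(x)=\arg\max_i p_i(x)$, and therefore $D^*\sub{max}(x)=\max_i D_i^*(x)=\frac{\max_i p_i(x)}{\max_i p_i(x)+p\sub{g}(x)}$.

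Next I would substitute the definition $p\sub{max}(x)=\frac1Z\max_i p_i(x)$, i.e. $\max_i p_i(x)=Z\,p\sub{max}(x)$, into the displayed expression to obtain
\begin{equation}
D^*\sub{max}(x)=\frac{Z\,p\sub{max}(x)}{Z\,p\sub{max}(x)+p\sub{g}(x)}=\frac{p\sub{max}(x)}{p\sub{max}(x)+\frac1Z p\sub{g}(x)},
\end{equation}
which is exactly the claimed form with $\alpha=1/Z$. Since $Z=\int_x\max_i p_i(x)\,\mathrm dx>0$ is a finite positive constant independent of $x$, $\alpha$ is a well-defined positive constant, and comparing with Eq.~(\ref{eq:optimal_d}) shows $D^*\sub{max}$ is the optimal discriminator one would have trained directly from $p\sub{max}$ (up to the harmless rescaling of $p\sub{g}$ by $\alpha$, which does not change where the generator's optimum lies).

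I would also dispose of the degenerate cases for completeness: where $p\sub{g}(x)=0$ both sides equal $1$ (or are defined by the same limiting convention), and where $\max_i p_i(x)=0$ both sides equal $0$; in either case the identity holds trivially. The one step that needs a careful word rather than a routine calculation is the monotonicity argument that lets us commute $\max_i$ past the rational function — that is, the claim that the most ``forgiving'' discriminator at $x$ is precisely the one belonging to the client whose density is largest at $x$. This is the conceptual crux linking $\max_i D_i(x)$ to $\max_i p_i(x)$; everything after it is algebraic bookkeeping with the normalizing constant $Z$.
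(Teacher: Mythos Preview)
Your proposal is correct and follows essentially the same approach as the paper: both argue that $D_i^*(x)=\frac{p_i(x)}{p_i(x)+p\sub{g}(x)}$ is monotone increasing in $p_i(x)$ for fixed $p\sub{g}(x)$, so the pointwise max over $i$ picks out $\max_i p_i(x)$, and then both substitute $\max_i p_i(x)=Z\,p\sub{max}(x)$ to obtain $\alpha=1/Z$. Your treatment is slightly more thorough in handling the degenerate cases $p\sub{g}(x)=0$ and $\max_i p_i(x)=0$, which the paper omits, but otherwise the arguments are identical.
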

\begin{proof}
Eq.~(\ref{eq:optimal_d}) can be represented by $D^*_i(x)=1-\frac{p\sub{g}(x)}{p_i(x)+p\sub{g}(x)}$. By fixing $x$ and regarding $p\sub{g}(x)$ as a positive constant, we see that $D^*_i(x)$ monotonically increases with $p_i(x)$ within $p_i(x)\in[0, 1]$. Thus, 
\begin{eqnarray}
    D^*\sub{max}(x)&=&\max_i D^*_i(x) = \frac{\max_ip_i(x)}{\max_ip_i(x)+p\sub{g}(x)}\\
    &=&\frac{p\sub{max}(x)}{p\sub{max}(x) +\alpha p\sub{g}(x)},
\end{eqnarray}
where $\alpha=\frac{1}{Z}=(\int_x \max_i p_i(x)\textrm{d}x)^{-1} > 0$.
\end{proof}

Then, by substituting $D(x)=D^*\sub{max}(x)$ and $y\sub{r'}=1$ for the objective function $\mathcal{L}\sub{G}$ in Eq.~(\ref{eq:centralized_g}), and by letting $l$ be the mean-squared error as done in \cite{Mao2017}, we obtain:
\begin{eqnarray}
    \mathcal{L}\sub{G}&=&\frac{1}{2}\left\{\mathop{\mathbb{E}}_{x\sim p\sub{max}}\left[(D^*\sub{max}(x) - 1)^2\right] \right.\\
    &&\left.+ \mathop{\mathbb{E}}_{x\sim p\sub{g}}\left[(D^*\sub{max}(x) - 1)^2\right]\right\}\\
    &=&\frac{1}{2}\int_x\frac{(p\sub{max}(x)+p\sub{g}(x))\alpha^2 p\sub{g}^2(x)}{(p\sub{max}(x)+\alpha p\sub{g}(x))^2}\mathrm{d}x.
    \label{eq:objective}
\end{eqnarray}
\begin{theorem}
The global minimum of $\mathcal{L}\sub{G}$ given $D^*\sub{max}(x)$ is achieved if and only if $p_g=p\sub{max}$.
\label{proof:theorem2}
\end{theorem}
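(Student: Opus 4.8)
The plan is to treat $\mathcal{L}\sub{G}$ from Eq.~(\ref{eq:objective}) as a functional of the generator density $p\sub{g}$ over the feasible set $\{p\sub{g}\ge 0,\ \int_x p\sub{g}=1\}$, show that it is bounded below by $\alpha^2/(1+\alpha)^2$, and show that this value is attained if and only if $p\sub{g}=p\sub{max}$. The one-line idea is to rewrite the integral in terms of the likelihood ratio $r(x)=p\sub{g}(x)/p\sub{max}(x)$, recognize the resulting integrand as a \emph{convex} function of $r$, and apply Jensen's inequality under the constraint that $r$ has mean one.

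Concretely, I would first restrict to the support $S=\{x:p\sub{max}(x)>0\}$ and substitute $p\sub{g}=r\,p\sub{max}$ into Eq.~(\ref{eq:objective}); a short computation turns the integrand over $S$ into $p\sub{max}(x)\,g(r(x))$ with
\[
g(r)=\frac{\alpha^2 r^2(1+r)}{(1+\alpha r)^2},\qquad r\ge 0.
\]
Since $\int_x p\sub{max}=1$ by the definition of $Z$, the mass constraint $\int_x p\sub{g}=1$ becomes $\mathbb{E}_{x\sim p\sub{max}}[r(x)]=1$, so $r$ is a nonnegative random variable of mean one on the probability space $(S,p\sub{max})$; and plugging $p\sub{g}=p\sub{max}$ into Eq.~(\ref{eq:objective}) gives $\mathcal{L}\sub{G}=g(1)/2=\alpha^2/(1+\alpha)^2$, the candidate optimum.

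The crux — and the step I expect to be the only real obstacle — is that $g$ is \emph{strictly convex} on $[0,\infty)$, and this is exactly where the structure $\alpha=1/Z$ enters: because $Z=\int_x\max_i p_i\ge\int_x p_j=1$ we have $\alpha\in(0,1]$. Changing variables to $v=1+\alpha r\ge 1$, the function $g$ becomes (up to the positive factor $1/\alpha$) an affine function of $v$ plus the terms $(3-2\alpha)/v$ and $(\alpha-1)/v^{2}$, so $g''$ has the sign of $(3-2\alpha)v+3(\alpha-1)$; this quantity equals $\alpha>0$ at $v=1$ and is increasing in $v$ since $3-2\alpha>0$, hence $g''>0$ throughout. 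The bookkeeping that genuinely needs $\alpha\le 1$ is confined to this check. Given convexity, Jensen's inequality yields
\[
2\mathcal{L}\sub{G}=\mathbb{E}_{x\sim p\sub{max}}\!\big[g(r(x))\big]\ \ge\ g\!\big(\mathbb{E}_{x\sim p\sub{max}}[r(x)]\big)=g(1)=\frac{2\alpha^2}{(1+\alpha)^2},
\]
with equality iff $r$ is $p\sub{max}$-a.s.\ constant; the mean-one constraint then pins $r\equiv 1$, \ie\ $p\sub{g}=p\sub{max}$, establishing both the lower bound and that it is attained only there.

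Finally, to be rigorous when $p\sub{max}$ vanishes on a set of positive measure, I would observe that on $S^{c}$ the integrand of Eq.~(\ref{eq:objective}) reduces to $p\sub{g}$ itself; writing $m=\int_S p\sub{g}\in[0,1]$ and applying Jensen on $S$ gives $2\mathcal{L}\sub{G}\ge g(m)+(1-m)=:\phi(m)$. Since $\phi'=g'-1$ is increasing (by the convexity just proved) and $\phi'(1)=\tfrac{\alpha^2(5+\alpha)}{(1+\alpha)^3}-1\le 0$ — a one-line inequality, $\alpha^2(5+\alpha)\le(1+\alpha)^3$, that again uses $\alpha\le 1$ — the function $\phi$ is non-increasing on $[0,1]$, so its minimum over that interval is at $m=1$, recovering $\phi(1)=g(1)$; and equality forces $m=1$ together with $r\equiv 1$ on $S$, \ie\ $p\sub{g}=p\sub{max}$ once more, completing the argument.
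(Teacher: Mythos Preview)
Your proof is correct and follows essentially the same route as the paper: both identify the integrand in Eq.~(\ref{eq:objective}) as $p_{\max}$ times a convex function of the ratio $p_g/p_{\max}$ (your $g$ is the paper's $f$ shifted by the constant $g(1)$), verify convexity via the same second-derivative check requiring $\alpha\le 1$, and conclude by Jensen --- which the paper packages as nonnegativity of the associated $f$-divergence. Your explicit treatment of the region where $p_{\max}=0$ is a worthwhile addition that the paper leaves to the usual $f$-divergence convention $0\cdot f(p/0)=p\cdot\lim_{t\to\infty}f(t)/t$.
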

\begin{proof}
Importantly, the theoretical result in \cite{Mao2017} based on the minimization of Pearson $\chi^2$ divergence is not directly applicable here because $\alpha=\frac{1}{Z}$ in Eq.~(\ref{eq:objective}) is fixed but unknown in practice. To explicitly deal with $\alpha$ in the divergence minimization, we introduce the following function $f$:
\begin{equation}
   f(x)=\frac{(x+1)\alpha^2 x^2}{(1 + \alpha x)^2} - \frac{2\alpha^2}{(1+\alpha)^2},
\end{equation}
where $f(1)=0$, continuous and convex for $x\geq 0$ (see the supplementary material for more detail). This function can then be used to define the $f$-divergence below:
\begin{eqnarray}
    \mathcal{D}_f(p||q) &=& \int_x q(x)f\left(\frac{p(x)}{q(x)}\right)\mathrm{d}x\\
    &=&\int_x\frac{(q(x)+p(x))\alpha^2p^2(x)}{(q(x)+\alpha p(x))^2}\mathrm{d}x + C,
\end{eqnarray}
where $C=-\frac{2\alpha^2}{(1+\alpha)^2}$ is a constant. This $f$-divergence $\mathcal{D}_f$ is non-negative and becomes zero if and only if $p=q$. Finally, $\mathcal{L}\sub{G}$ in Eq.~(\ref{eq:objective}) can be rearranged with $\mathcal{D}_f$ as follows:
\begin{equation}
    \mathcal{L}\sub{G}=\frac{1}{2}\mathcal{D}_f(p\sub{g}||p\sub{max}) - \frac{1}{2}C.
    \label{eq:objective_final}
\end{equation}
From Eq.~(\ref{eq:objective_final}), $\mathcal{L}\sub{G}$ reaches the global minimum if and only if $p\sub{g}=p\sub{max}$.
\end{proof}

\subsection{Forgiver-First Aggregation}
\label{subsec:F2A}

While F2U has a theoretical guarantee to achieve $p\sub{g}=p\sub{max}$ as the global optimum, we rarely obtain optimal discriminators $D^*_i(x)$ in practice. Moreover, \cite{Durugkar2016} shows that involving many discriminators, instead of selecting one of them, can accelerate the training process. Therefore, we propose F2A that \emph{aggregates} $D_i(x)$s while emphasizing more forgiving ones as follows:
\begin{equation}
D\sub{agg}(x) = \sum_i S_\lambda(x)D_i(x),
\label{eq:f2a_forward}
\end{equation}
where
\begin{equation}
    S_\lambda(x) = \frac{\exp(\lambda D_i(x))}{\sum_j\exp(\lambda D_j(x))}.
\end{equation}
Here, $\lambda\geq 0$ is a parameter allowing us to take different aggregation strategies to better adapt given client data. When $\lambda$ becomes larger, $D\sub{agg}(x)$ will converge to $D\sub{max}(x)$, which would benefit those cases where client data are highly non-iid and severely overlapping. In contrast, when $\lambda$ is nearly $0$, $D\sub{agg}(x)$ will become just the average of $D_i(x)$, which would work well when the client data are iid and significantly overlapping.

Moreover, we update $\lambda$ adaptively with $G$. This makes it unnecessary to manually try multiple choices of $\lambda$ to find better ones based on how non-iid client data are. Specifically, we augment $\mathcal{L}\sub{G}$ in Eq.~(\ref{eq:centralized_g}) to introduce the following regularized objective to update $G$:
\begin{eqnarray}
\mathcal{L}'\sub{G}&=&\mathop{\mathbb{E}}_{z \sim p\sub{z}}\left[l\left(D\sub{agg}(G(z)), y\sub{r'}\right)\right] + \beta\lambda^2 \\
&=& \mathop{\mathbb{E}}_{x \sim p\sub{g}}\left[l\left(D\sub{agg}(x), y\sub{r'}\right)\right] + \beta\lambda^2, 
\label{eq:decentralized_g}
\end{eqnarray}
where we omit the term $\mathop{\mathbb{E}}_{x\sim p\sub{data}}\left[l\left(D(x), y\sub{r'}\right)\right]$ in Eq.~(\ref{eq:centralized_g}) as it does not contain $G$. By computing the gradient of $\mathcal{L}'\sub{G}$ with respect to $\lambda$, we obtain:
\begin{equation}
\frac{\partial \mathcal{L}'\sub{G}}{\partial \lambda} = \mathbb{E}_{x}\left[\frac{\partial l}{\partial D\sub{agg}(x)}\frac{\partial D\sub{agg}(x)}{\partial \lambda}\right] + 2\beta \lambda,\label{eq:f2a_bp_lambda0}\\
\end{equation}
\begin{equation}
\small
\frac{\partial D\sub{agg}(x)}{\partial \lambda} = \sum_i S_\lambda(x) D_i(x)^2 - \left(\sum_i S_\lambda(x) D_i(x)\right)^2.
\label{eq:f2a_bp_lambda}
\end{equation}
Here, $\frac{\partial l}{\partial D\sub{agg}(x)}$ is the original loss gradient measured on a single aggregated judgment $D\sub{agg}(x)$, which is multiplied by $\frac{\partial D\sub{agg}(x)}{\partial \lambda}$ that can be viewed as the variance of $D_i(x)$ weighted by $S_\lambda(x)$. Updating $\lambda$ by gradient descent with $\frac{\partial \mathcal{L}'\sub{G}}{\partial \lambda}$ is therefore reasonable because $\lambda$ will be increased when input data collections are non-iid and making $D_i(x)$s diverse, and be decreased otherwise.

For updating generator $G$, let us denote the parameters of $G$ by $\theta\sub{g}$. While simplifying formal notations of chain rules, the loss gradient with respect to $\theta\sub{g}$ is derived as follows:
\begin{equation}
\small
    \frac{\partial \mathcal{L}'\sub{G}}{\partial \theta\sub{g}} = \mathbb{E}_{x}\left[\frac{\partial l}{\partial D\sub{agg}(x)}\left(\sum_i\frac{\partial D\sub{agg}(x)}{\partial D_i(x)}\frac{\partial D_i(x)}{\partial x}\right)\frac{\partial x}{\partial \theta\sub{g}}\right],
\label{eq:f2a_bp_g}
\end{equation}
where $\frac{\partial D\sub{agg}(x)}{\partial D_i(x)}$ can further be decomposed to:
\begin{equation}
\frac{\partial D\sub{agg}(x)}{\partial D_i(x)} = S_\lambda(x) + \lambda D_i(x) S_\lambda(x)(1-S_\lambda(x)).
\label{eq:ds}
\end{equation}
When $\lambda$ is small, it makes the first term $S_\lambda(x)$ dominant and treats $\frac{\partial D_i(x)}{\partial x}$ as equal when updating $G$. In contrast, when $\lambda$ becomes large, it emphasizes $\frac{\partial D_i(x)}{\partial x}$ with larger $D_i(x)$ and encourages $G$ to fool more forgiving $D_i(x)$.

\textcolor{black}{Note that the adaptive aggregation presented above is partly inspired by \cite{Durugkar2016}. It aims at the stabilizing the learning process by aggregating the losses computed with each discriminator, \ie, $l(D_i(G(z)), y\sub{r'})$, with a trainable softmax function so that \emph{discriminators with higher losses are emphasized more} when updating the generator. By contrast, the proposed F2A can be viewed as emphasizing discriminators with lower losses, by adaptively aggregating discriminator's responses instead of their losses. This formulation enables the update rule of $\lambda$ based on Eq.~(\ref{eq:f2a_bp_lambda}), allowing our approach to learn a variety of data distributions with unknown non-iidness, while \cite{Durugkar2016} assumes all the discriminators to be trained on the identical data distribution.}

\section{Experiments}
We empirically evaluate the decentralized GANs with F2U and F2A on image generation tasks using decentralized versions of public image datasets. \textcolor{black}{Note that our goal here is to evaluate how our aggregation algorithms work well on decentralized non-iid data. Therefore, we implemented all of the generator, discriminators, and client data in a single workstation for the simulation. Similar to other relevant work~\cite{Mcmahan2017}, more practical aspects of decentralized learning frameworks such as their communication throughput are beyond the scope of our experimental evaluation.}

\subsection{Implementation Details}
As a backbone model, we implemented a variant of LSGANs~\cite{Mao2017} with a DCGAN~\cite{Radford2015}-based architecture, which had spectral normalization~\cite{Miyato2018} instead of batch normalization~\cite{Ioffe2015} in discriminators (see our supplementary material for detail). Deeper models such as ones with residual blocks~\cite{He2016} and other sophisticated techniques such as gradient penalty~\cite{Gulrajani2017} would provide higher performance but were not used in this paper, because our focus is not to obtain the best possible performance but to investigate if GANs trained with our approaches could perform well under the decentralized non-iid setting.

To decentralize the backbone model, the discriminator presented above was instantiated $N$ times and initialized independently. The aggregation parameter $\lambda\geq 0$ for F2A was implemented by a one-channel fully connected layer with hidden trainable parameter $\lambda^*$ followed by ReLU activation, which was able to output $\lambda$ by receiving $1$ as input, \ie, $\lambda=\mathrm{ReLU}(\lambda^*\cdot 1)$. $\lambda^*$ was initialized by $0.1$, and the regularization strength $\beta$ was set to $0.1$. Both the generator and the discriminators were trained using Adam~\cite{Kingma2014} with learning rate $\eta=0.0002$, $\alpha=0.5, \beta=0.999$. All the implementations were done with Keras and evaluated on NVIDIA Tesla V100.

\subsection{Baseline Methods}
F2U and F2A consist of multiple discriminators to train a single generator. We chose the following state-of-the-art GANs as baseline methods. For all of the methods, we used the same model architecture and optimization strategy to ensure fair comparisons.

\begin{itemize}
\item\textbf{Multi-Discriminator Generative Adversarial Networks (MD-GAN)}\cite{Hardy2018} is a pioneering attempt to decentralize GAN training that a) asks each client to train an individual discriminator with their own data and b) updates the generator to fool those multiple discriminators. The generator is updated by applying loss gradients computed by each discriminator in turn.
\item \textbf{Generative Multi-Adversarial Networks (GMAN)} \cite{Durugkar2016} is another recent work that introduced multiple discriminators that were however trained from the identical data distribution as discussed in Section~\ref{subsec:F2A}. We evaluated two variants of GMAN proposed in the paper: \textbf{GMAN*} that updated the adaptive aggregation parameter $\lambda$ via back-propagation, and \textbf{GMAN-0} that used fixed $\lambda=0$.
\end{itemize}
\textcolor{black}{We also evaluated a backbone GAN trained with the centralized setting. We stress here that it is not possible to evaluate other decentralized learning methods such as~\cite{Lian2017,Mcmahan2017} because they focus exclusively on supervised learning. For example, \cite{Hardy2018} tried to adapt federated learning~\cite{Mcmahan2017} to the problem of learning GANs from decentralized data, which however failed to work as shown in their results.}

\subsection{Data and Preprocessing}
\textcolor{black}{We used the training split of MNIST (60,000 samples) for analyzing how the proposed F2U and F2A performed in detail. Moreover, we compared our approaches against the baselines on the training split of Fashion MNIST (60,000 samples)~\cite{Xiao2017} and CINIC-10 (270,000 samples)~\cite{Darlow2018}. CINIC-10 is particularly a challenging dataset as it is more than four times larger than MNIST and Fashion MNIST. Also, the samples in CINIC-10 are diverse as they are drawn from CIFAR-10 and ImageNet.}

All the datasets above comprised 10 different classes. We set $N=5$ and split the dataset into five subsets (\ie, $X_1, X_2,X_3,X_4,X_5$ as described below) with the following conditions such that the original data distribution $p\sub{data}$ can be regarded as $p\sub{data}=p\sub{max}$:
\begin{itemize}
\item\textbf{Non-Overlapping (Non-Ovl)}: The subsets $X_1,X_2,X_3,X_4,X_5$ respectively contained the images of $\{0, 1\}$, $\{2, 3\}$, $\{4, 5\}$, $\{6, 7\}$, $\{8, 9\}$-th classes, standing for the most challenging condition.
\item\textbf{Moderately Overlapping (Mod-Ovl)}: The subsets $X_1,X_2,X_3,X_4,X_5$ respectively contained the images of $\{0, 1, 2, 3\}$, $\{2, 3, 4, 5\}$, $\{4, 5, 6, 7\}$, $\{6, 7, 8, 9\}$, $\{8, 9, 0, 1\}$-th classes.
\item\textbf{Fully Overlapping (Full-Ovl)}: All the subsets contained all the classes equally, though such iid cases were not of our main focus.
\end{itemize}

\subsection{Evaluation Process and Metric}
We chose the Fr\'echet Inception Distance (FID)~\cite{Heusel2017} as our evaluation metric. As discussed in \cite{Lucic2018}, FID is sensitive to mode dropping; \ie, it degrades when some of the classes contained in the original dataset are missing in generated data, and thus serves as a suitable metric in this work to see if all the different classes client data had were learned successfully. In our experiments, we randomly sampled 10,000 images from both of the training data and trained generators to compute FID scores.

Importantly, we found that the choices of hyperparameters for training GANs, such as a mini-batch size and the number of iterations, affected FID scores greatly and differently for each method, which was also discussed in \cite{Lucic2018}. Instead of picking out one specific choice of hyperparameters, we tested each method with the combinations of mini-batch sizes $\{32, 64\}$ and the number of iterations $\{25000, 50000\}$ for MNIST and Fashion MNIST and $\{50000, 100000, 150000, 200000, 250000, 300000\}$ for CINIC-10, and reported the median, minimum, and maximum FID scores of those hyparparameter combinations. Each combination was tested once with fixed random seeds.

\subsection{Results}

\paragraph{Comparison with Baselines}
Tables~\ref{tab:MNIST}, \ref{tab:FMNIST}, and \ref{tab:CINIC10} list the FID scores. For all the datasets we tested, we found that i) when client data were non-overlapping or moderately overlapping, F2U or F2A clearly outperformed the baselines; and ii) when the data were fully overlapping, MD-GAN worked best but yet the other approaches including F2A (and sometimes F2U) performed comparably well against backbone GANs trained with the centralized setting.
Moreover, we visualize how the aggregation parameter $\lambda$ changed over iterations in Figure~\ref{fig:MNIST_lambda}. We confirmed that, when client data were non-overlapping or moderately overlapping, $\lambda$ increased greatly until it was saturated by regularization. In contrast, $\lambda$ increased less when the data were fully overlapping.

\begin{table}[t]
\caption{\textbf{FID Scores on MNIST} in the form of \texttt{median (min - max)} across multiple hyperparameter combinations. The backbone GAN trained with the centralized setting achieved 19.37 (16.14 - 23.01).}
\label{tab:MNIST}
\centering
\adjustbox{max width=\linewidth}{
\begin{tabular}{lccc}
\toprule
 Methods                 & Non-Ovl          & Mod-Ovl         & Full-Ovl        \\
\midrule
 \multirow{2}{*}{MD-GAN} & 38.42                     & 34.33                    & \textbf{11.90}                    \\
                         & {\small (36.49 - 39.42)}  & {\small (26.76 - 41.13)} & \textbf{\small (10.85 - 15.13)} \\
 \multirow{2}{*}{GMAN*}  & 67.69                     & 58.65                    & 20.86                    \\
                         & {\small (54.23 - 93.01)}  & {\small (52.33 - 63.68)} & {\small (14.23 - 21.34)} \\
 \multirow{2}{*}{GMAN-0} & 69.83                     & 49.92                    & 18.90                    \\
                         & {\small (57.66 - 101.62)} & {\small (45.52 - 57.51)} & {\small (15.55 - 22.43)} \\
\midrule
 \multirow{2}{*}{F2U}    & 22.19                     & \textbf{13.38}                    & 14.32                    \\
                         & {\small (16.64 - 29.23)}  & \textbf{\small (10.25 - 15.47)} & {\small (11.32 - 19.27)} \\
 \multirow{2}{*}{F2A}    & \textbf{18.96}                     & 14.53                    & 17.21                    \\
                         & \textbf{\small (16.54 - 19.96)}  & {\small (12.54 - 16.67)} & {\small (13.85 - 25.25)} \\
\bottomrule
\end{tabular}
}
\end{table}
\begin{table}[t]
\caption{\textbf{FID Scores on Fashin MNIST}. The backbone GAN achieved 26.18 (21.08 - 31.10).}
\label{tab:FMNIST}
\centering
\adjustbox{max width=\linewidth}{
\begin{tabular}{lccc}
\toprule
 Methods                 & Non-Ovl          & Mod-Ovl         & Full-Ovl        \\
\midrule
 \multirow{2}{*}{MD-GAN} & 56.09                    & 47.12                    & \textbf{22.04}                    \\
                         & {\small (52.32 - 63.09)} & {\small (45.00 - 51.43)} & \textbf{\small (19.15 - 24.68)} \\
 \multirow{2}{*}{GMAN*}  & 56.79                    & 49.84                    & 27.97                    \\
                         & {\small (53.01 - 58.14)} & {\small (46.91 - 51.52)} & {\small (25.91 - 31.35)} \\
 \multirow{2}{*}{GMAN-0} & 55.21                    & 49.31                    & 29.86                    \\
                         & {\small (51.89 - 57.68)} & {\small (46.00 - 57.81)} & {\small (25.27 - 33.29)} \\
\midrule
 \multirow{2}{*}{F2U}    & 43.07                    & 32.65                    & 36.87                    \\
                         & {\small (34.79 - 56.23)} & {\small (27.94 - 37.21)} & {\small (33.55 - 37.42)} \\
 \multirow{2}{*}{F2A}    & \textbf{37.16}                    & \textbf{29.03}                    & 25.82                    \\
                         & \textbf{\small (26.61 - 37.32)} & \textbf{\small (24.32 - 31.88)} & {\small (24.49 - 29.18)} \\
\bottomrule
\end{tabular}
}
\end{table}
\begin{table}[t]
\caption{\textbf{FID Scores on CINIC-10}. The backbone GAN achieved 23.82 (22.40 - 33.02).}
\label{tab:CINIC10}
\centering
\adjustbox{max width=\linewidth}{
\begin{tabular}{lccc}
\toprule
 Methods                 & Non-Ovl          & Mod-Ovl         & Full-Ovl        \\
\midrule
 \multirow{2}{*}{MD-GAN} & 37.32                    & 34.72                    & \textbf{23.29}                    \\
                         & {\small (35.68 - 45.22)} & {\small (31.21 - 41.54)} & \textbf{\small (20.77 - 30.45)} \\
 \multirow{2}{*}{GMAN*}  & 40.49                    & 36.45                    & 29.38                    \\
                         & {\small (37.74 - 45.61)} & {\small (32.13 - 42.65)} & {\small (26.07 - 35.06)} \\
 \multirow{2}{*}{GMAN-0} & 39.39                    & 35.77                    & 28.28                    \\
                         & {\small (36.46 - 43.22)} & {\small (32.55 - 38.87)} & {\small (26.05 - 37.14)} \\
\midrule
 \multirow{2}{*}{F2U}    & 42.27                    & 31.32                    & 31.31                    \\
                         & {\small (27.55 - 59.76)} & {\small (25.74 - 49.49)} & {\small (27.61 - 44.89)} \\
 \multirow{2}{*}{F2A}    & \textbf{30.60}                    & \textbf{26.06}                    & 27.81                    \\
                         & \textbf{\small (24.36 - 33.62)} & \textbf{\small (22.98 - 36.11)} & {\small (24.83 - 37.58)} \\
\bottomrule
\end{tabular}
}
\end{table}

\begin{figure}
  \begin{center}
    \includegraphics[width=\linewidth]{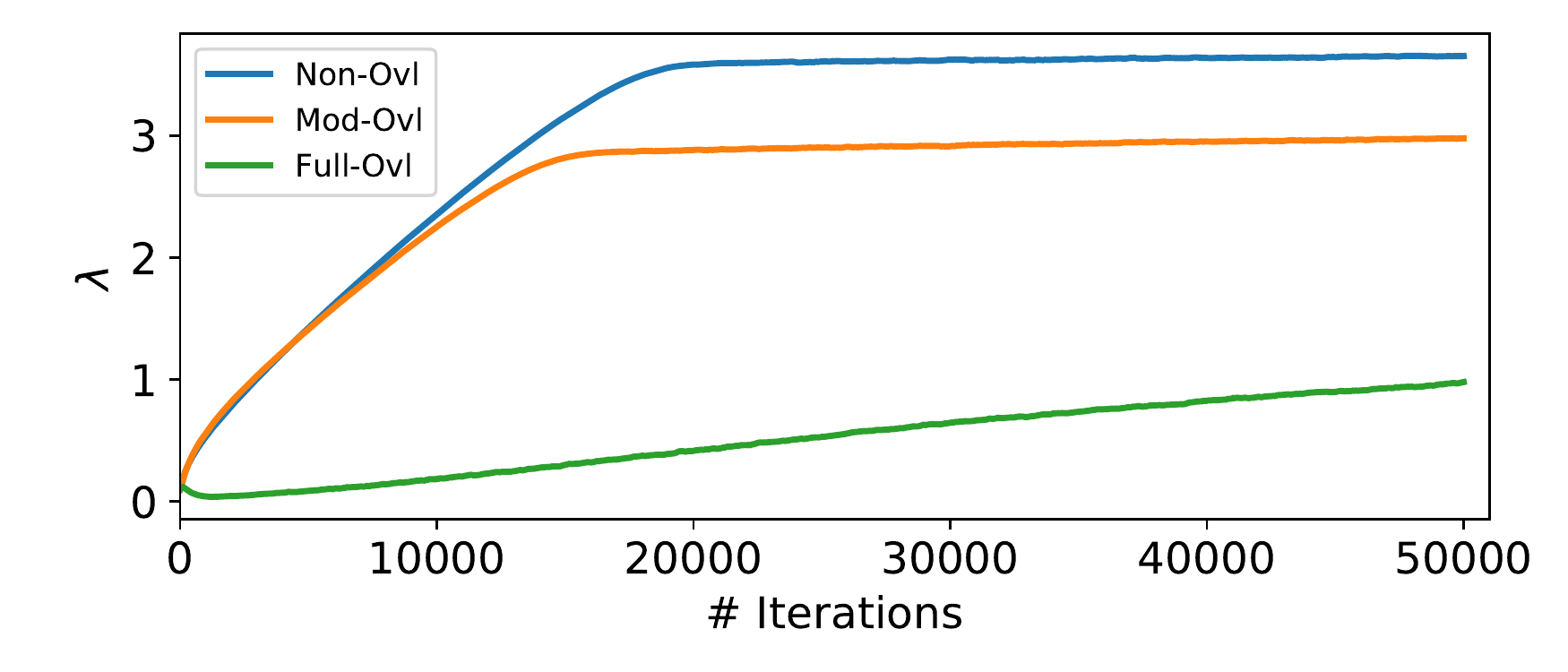}
    \includegraphics[width=\linewidth]{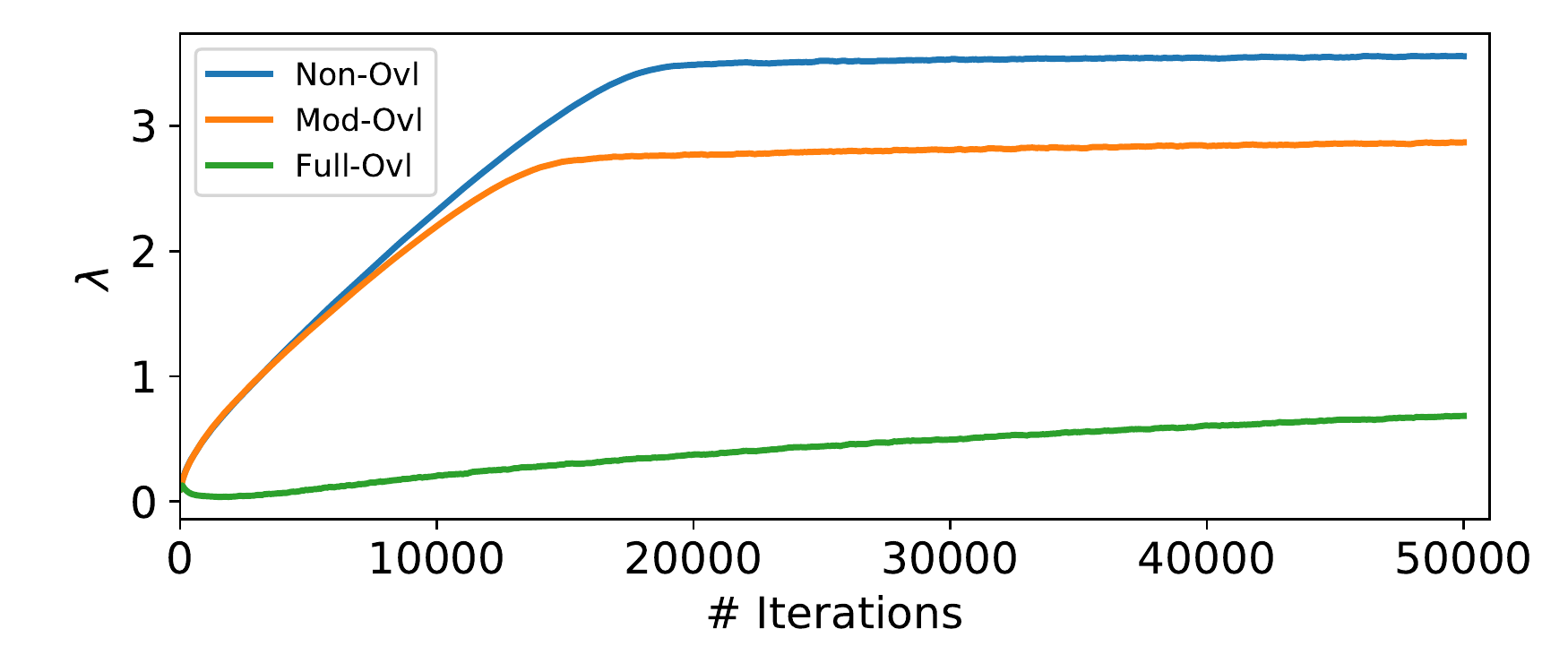}
    \includegraphics[width=\linewidth]{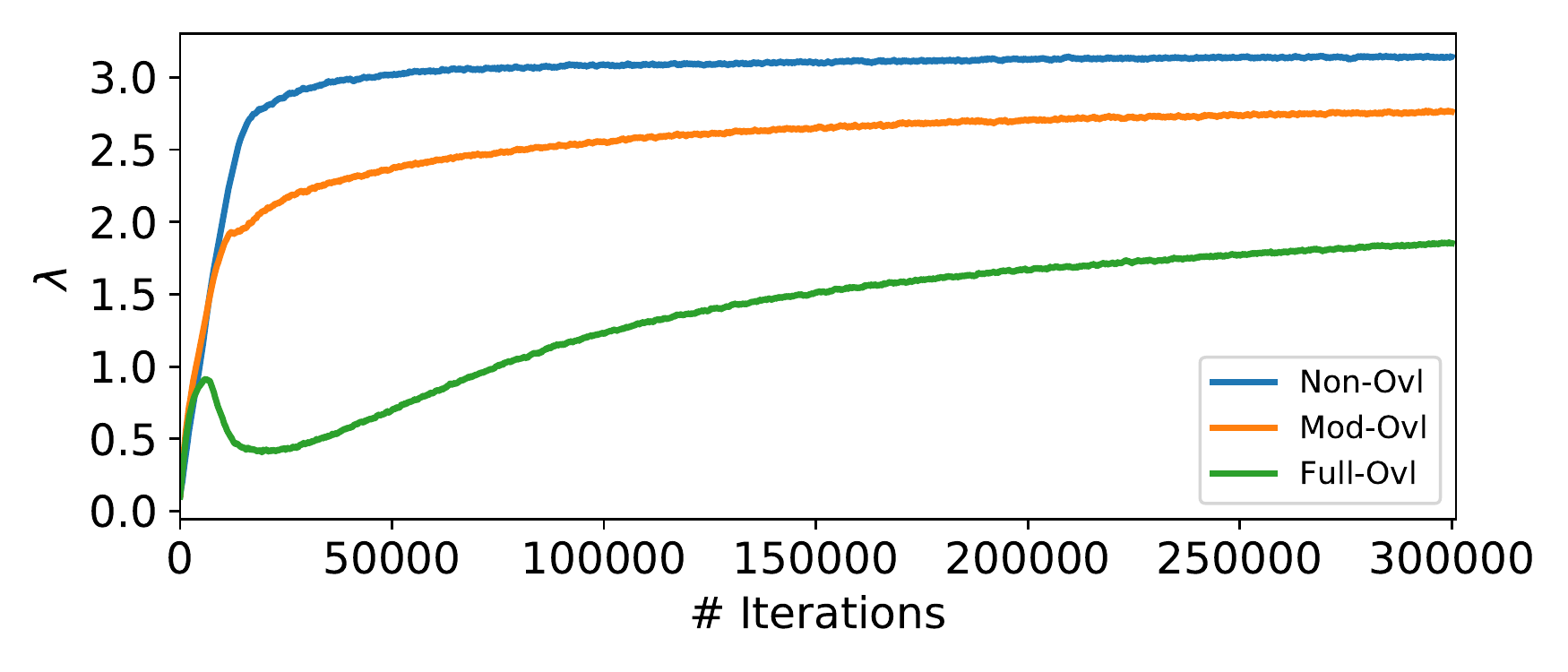}
    \caption{\textbf{Changes of $\lambda$ over Iterations} for MNIST (top), Fashion MNIST (middle), and CINIC-10 (bottom)}
    \label{fig:MNIST_lambda}
  \end{center}
\end{figure}

\paragraph{Effects of Aggregation Parameters}

Table~\ref{tab:abl_3} lists other settings of $\lambda$ for MNIST, including if it was fixed to certain values ($\lambda=0,\;\lambda=3.65$ as the value after saturation under the non-overlapping condition) or was regularized weakly ($\beta=0.01$). Especially under non-overlapping and moderately-overlapping conditions, aggregations with fixed $\lambda$ presented limited performances regardless of how large or small $\lambda$ was, indicating the importance of dynamically updating $\lambda$. The weaker regularization with $\beta=0.01$ instead of $\beta=0.1$ was affected only slightly.


\begin{table}[t]
\caption{\textbf{Effects of Aggregation Parameters on MNIST}}
\label{tab:abl_3}
\centering
\adjustbox{max width=\linewidth}{
\begin{tabular}{lccc}
\toprule
 Methods                 & Non-Ovl          & Mod-Ovl         & Full-Ovl        \\
\midrule
 \multirow{2}{*}{$\lambda=0$ (fixed)}   & 46.76                    & 32.93                    & 17.26                    \\
                                        & {\small (35.93 - 53.85)} & {\small (28.88 - 36.00)} & {\small (14.41 - 22.58)} \\
 \multirow{2}{*}{$\lambda=3.6$ (fixed)} & 22.76                    & 23.35                    & 15.84                    \\
                                        & {\small (14.34 - 23.20)} & {\small (11.97 - 26.81)} & {\small (14.92 - 16.48)} \\
 \multirow{2}{*}{$\beta=0.01$}          & 21.99                    & 13.78                    & 17.01                    \\
                                        & {\small (17.97 - 24.36)} & {\small (12.22 - 18.13)} & {\small (15.88 - 18.43)} \\
\midrule
 \multirow{2}{*}{$\beta=0.1$}           & \textbf{18.96}                    & \textbf{14.53}                    & \textbf{17.21}                    \\
                                        & \textbf{\small (16.54 - 19.96)} & \textbf{\small (12.54 - 16.67)} & \textbf{\small (13.85 - 25.25)} \\
\bottomrule
\end{tabular}
}
\end{table}

\paragraph{Effect of Backbone Model Choices}
Tables~\ref{tab:abl_1} and \ref{tab:abl_2} show MNIST results with several other choices of backbone models, including a standard GAN with binary cross entropy loss (`BCE' in the table) with or without spectral normalization (SN), as well as LSGAN (`MSE' in the table) without SN. Overall, we found that the MSE loss and SN were both important; for both F2U and F2A and for all the conditions, MSE worked better than BCE when combined with SN.

\begin{table}[t]
\caption{\textbf{Effect of Backbone Model Choices on MNIST (F2U)}}
\label{tab:abl_1}
\centering
\adjustbox{max width=\linewidth}{
\begin{tabular}{lcccc}
\toprule
 Loss & SN    & Non-Ovl          & Mod-Ovl         & Full-Ovl        \\
\midrule
 \multirow{2}{*}{BCE} & \multirow{2}{*}{}             & 44.85                    & 21.03                    & 12.63                    \\
                      &                               & {\small (42.31 - 51.28)} & {\small (18.77 - 24.81)} & {\small (11.77 - 18.58)} \\
 \multirow{2}{*}{BCE} & \multirow{2}{*}{$\checkmark$} & 25.85                    & 28.06                    & 29.83                    \\
                      &                               & {\small (21.17 - 35.58)} & {\small (23.28 - 29.62)} & {\small (24.70 - 39.20)} \\
 \multirow{2}{*}{MSE} & \multirow{2}{*}{}             & 44.10                    & 23.16                    & \textbf{11.20}                    \\
                      &                               & {\small (41.88 - 48.36)} & {\small (20.80 - 31.06)} & \textbf{\small (9.52 - 12.61)}  \\
\midrule
 \multirow{2}{*}{MSE} & \multirow{2}{*}{$\checkmark$} & \textbf{22.19}                    & \textbf{13.38}                    & 14.32                    \\
                      &                               & \textbf{\small (16.64 - 29.23)} & \textbf{\small (10.25 - 15.47)} & {\small (11.32 - 19.27)} \\
\bottomrule
\end{tabular}
}
\end{table}

\begin{table}[t]
\caption{\textbf{Effect of Backbone Model Choices on MNIST (F2A)}}
\label{tab:abl_2}
\centering
\adjustbox{max width=\linewidth}{
\begin{tabular}{lcccc}
\toprule
 Loss & SN    & Non-Ovl          & Mod-Ovl         & Full-Ovl        \\
\midrule
 \multirow{2}{*}{BCE} & \multirow{2}{*}{}             & 43.62                    & 21.47                    & 11.89                    \\
                      &                               & {\small (42.59 - 45.35)} & {\small (16.96 - 22.85)} & {\small (11.25 - 16.54)} \\
 \multirow{2}{*}{BCE} & \multirow{2}{*}{$\checkmark$} & 24.04                    & 33.65                    & 28.02                    \\
                      &                               & {\small (20.33 - 26.42)} & {\small (26.93 - 40.08)} & {\small (23.40 - 29.34)} \\
 \multirow{2}{*}{MSE} & \multirow{2}{*}{}             & 74.65                    & 30.19                    & \textbf{11.72}                    \\
                      &                               & {\small (52.63 - 86.38)} & {\small (23.25 - 37.95)} & \textbf{\small (8.96 - 13.24)}  \\
\midrule
 \multirow{2}{*}{MSE} & \multirow{2}{*}{$\checkmark$} & \textbf{18.96}                    & \textbf{14.53}                    & 17.21                    \\
                      &                               & \textbf{\small (16.54 - 19.96)} & \textbf{\small (12.54 - 16.67)} & {\small (13.85 - 25.25)} \\
\bottomrule
\end{tabular}
}
\end{table}

\paragraph{Effect of the Number of Clients}
We also tested how performances changed when the number of clients $N$ became large: $N=10,\;N=20$. For $N=10$, we split MNIST into ten subsets such that each subset involved images of the only single digit. For $N=20$, we further divided each subset obtained in $N=10$ randomly into two subsets of the same size. Figure~\ref{fig:fid_N_MNIST} shows the median FID scores. As a reference, we also present $N=5$ under the non-overlapping condition in the figure. Both F2U and F2A clearly outperformed the other methods even when $N$ was large.

\begin{figure}[t]
  \begin{center}
    \includegraphics[width=\linewidth]{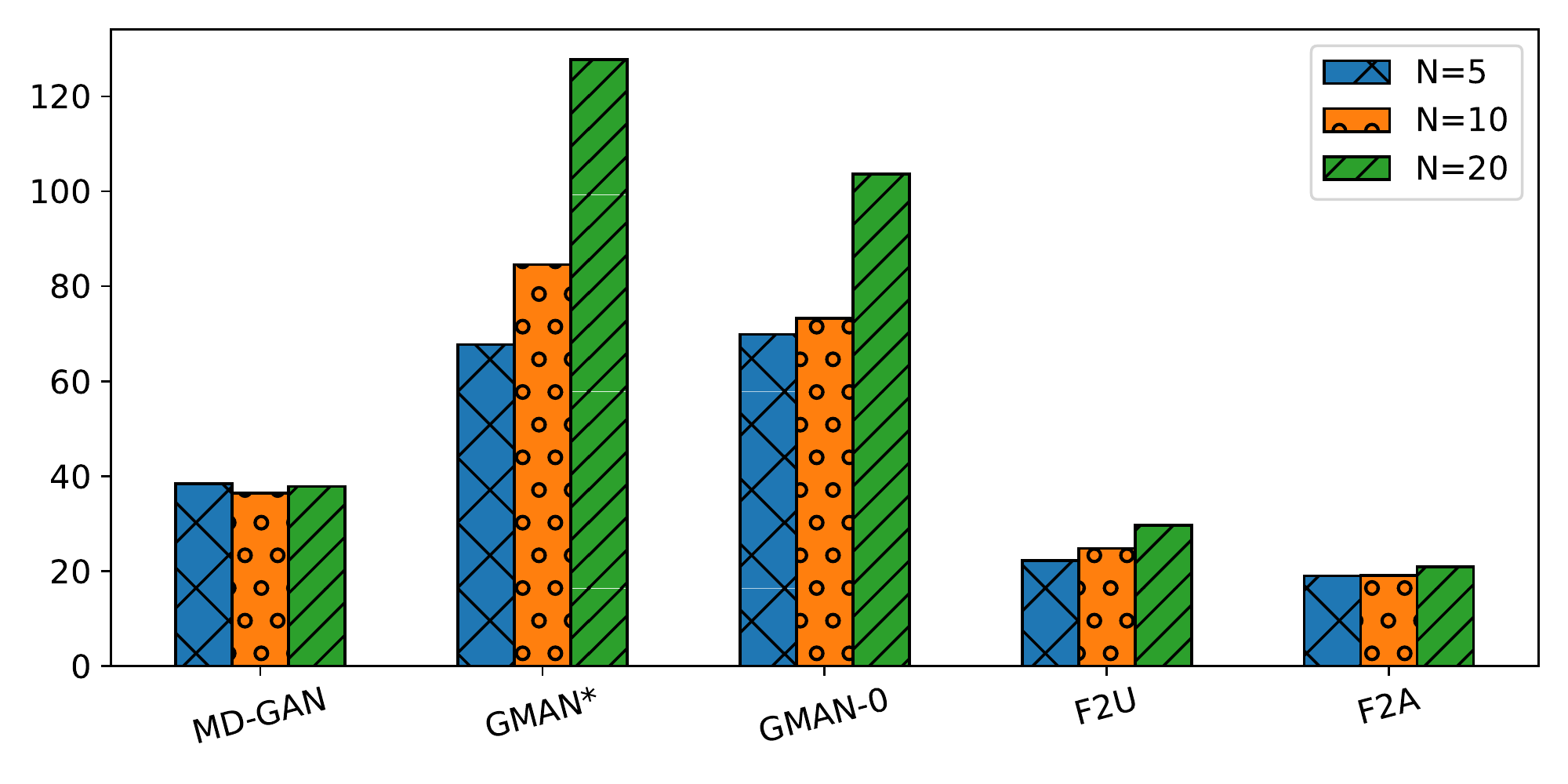}
    \caption{\textbf{Effects of N.} Median FID scores on MNIST across multiple hyperparameter combinations.}
    \label{fig:fid_N_MNIST}
  \end{center}
\end{figure}

\subsection{\textcolor{black}{Visualizing Generator's Distributions}}
\label{subsec:visualize}
\textcolor{black}{Finally, we conducted an extra experiment to visualize how well each method was able to learn non-iid data distributions with simpler examples. As shown in Figure~\ref{fig:toy}, we considered a problem of learning Gaussian mixtures in 1D an 2D spaces, where each client data was drawn from a distinct Gaussian distribution. We used a simple multi-layer perceptron for both of a generator and discriminators. We confirmed that the generator's distributions for F2U and F2A were fit to multiple client data distributions successfully, whereas the other methods completely failed to work.}

\begin{figure}[t]
  \begin{center}
    \includegraphics[width=\linewidth]{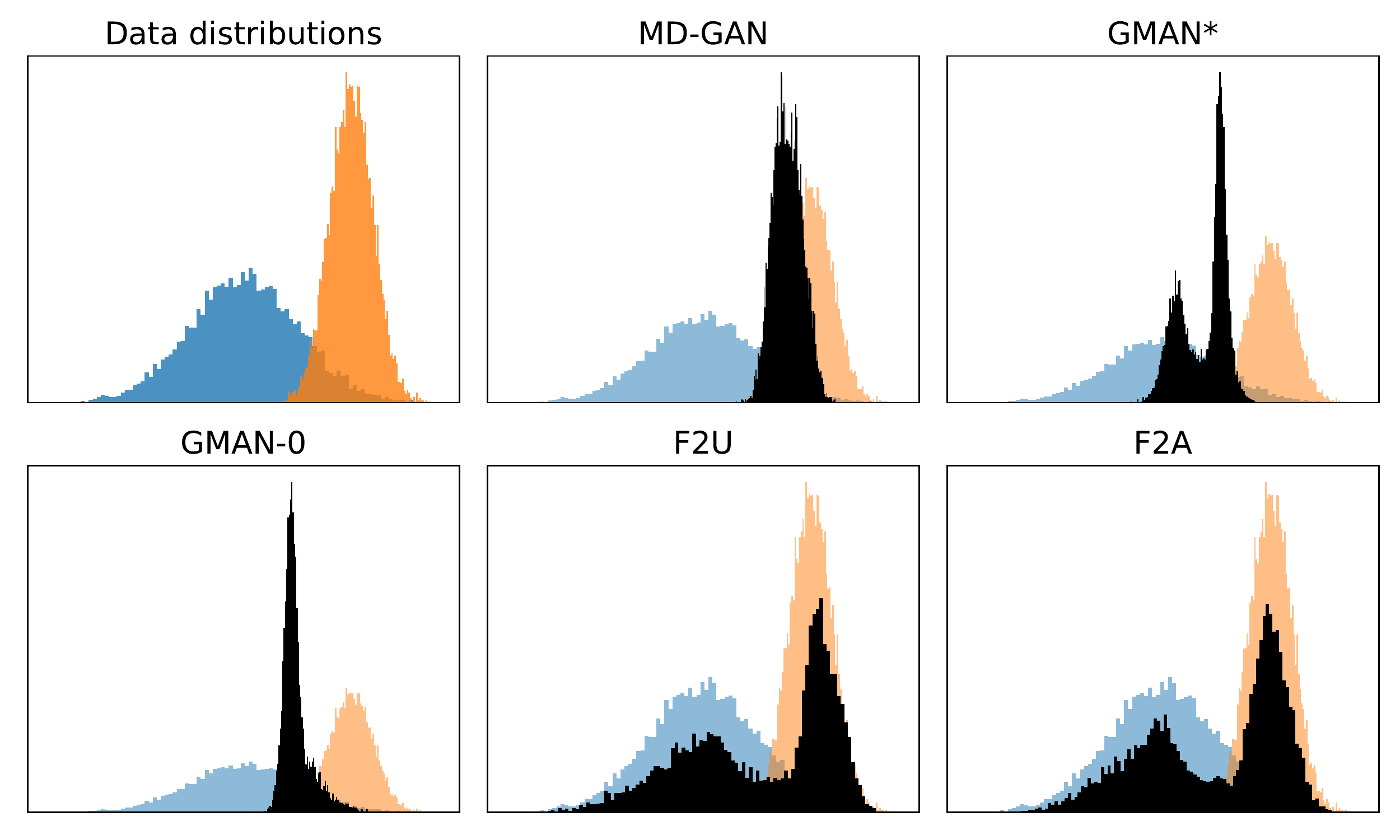}
    \includegraphics[width=\linewidth]{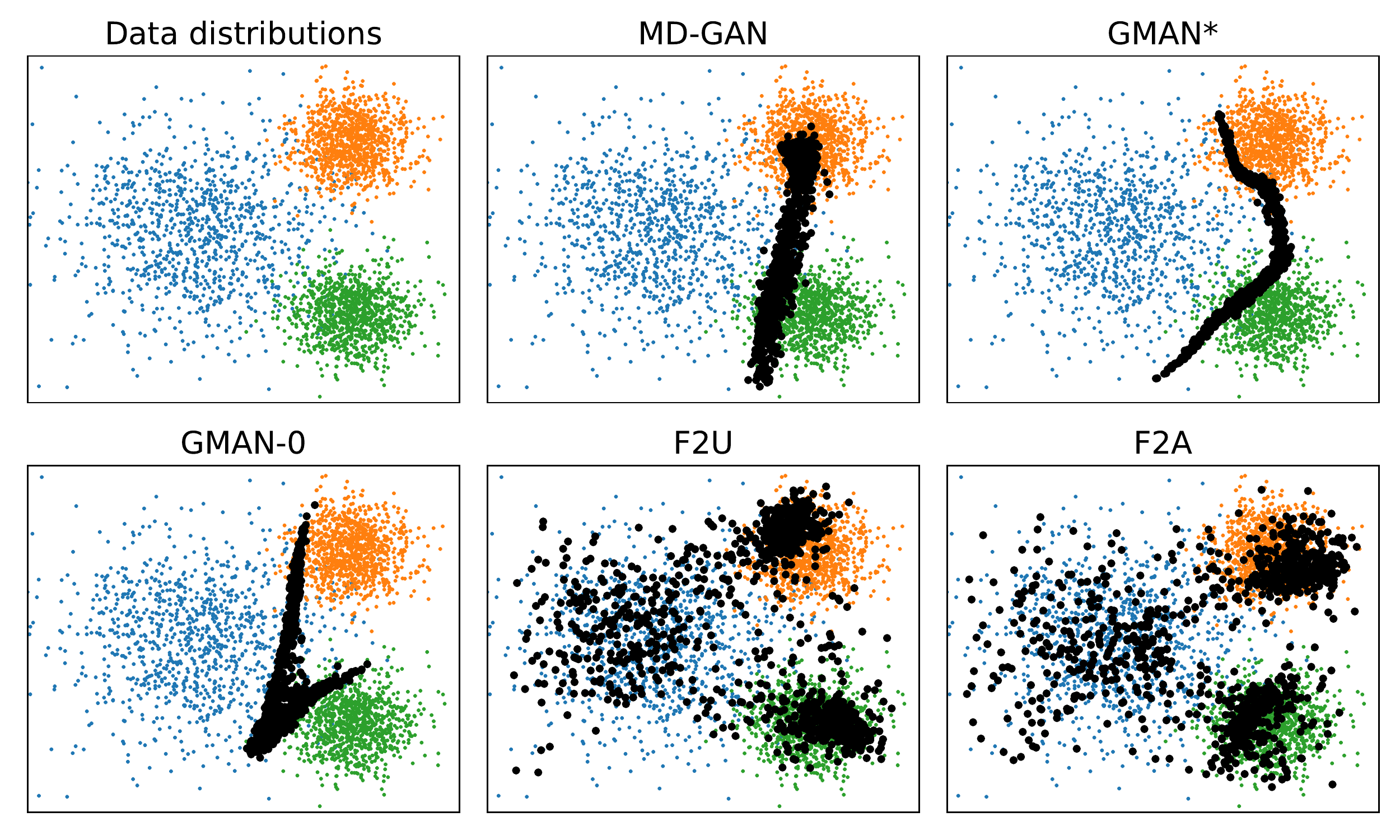}
    \caption{\textbf{Learning Gaussian Mixtures.} Blue/orange/green points describe data owned by different clients, while black points are sampled from learned generators.}
    \label{fig:toy}
  \end{center}
\end{figure}

\subsection{Limitations}
Our work has several limitations. i) Our empirical evaluation is based on a simulation in a single workstation. Practical implementations of the proposed approach as well as other baselines will come with problems of communication, security, and scalability for a large number of clients, as discussed in \cite{Mcmahan2017}. ii) Our approach by itself is not designed to resolve common problems observed in GAN training, such as mode collapse and theoretical guarantee for convergence. We still require additional contributions to make GANs perform well in practice. iii) The current formulation can be applied only to standard GANs with a generator and a discriminator. One interesting extension for future work is to deal with various architectures such as conditional GANs~\cite{Isola2017,Mirza2014,Miyato2018b,Odena2017,Reed2016} and GANs with multiple generators~\cite{Chavdarova2018,Choi2018,Ghosh2018,Liu2016,Zhu2017}.

\section{Conclusion}
We addressed the problem of learning GANs in a decentralized fashion from multiple non-iid data collections and presented new approaches called Forgiver-First Update (F2U) and Forgiver-First Aggregation (F2A). We hope that our work has raised a new challenge of decentralized deep learning, \ie, unsupervised decentralized learning from non-iid data, and will also impact a variety of real-world applications such as anomaly detection using decentralized medical data and learning image compression models using photo collections stored in smartphones.

\appendix

\section{On the Convexity of $f$}
To show the global optimality of $p\sub{g}=p\sub{max}$ with LSGANs in Theorem 2, we defined the following function $f$ in Eq. (13). 
\begin{equation}
   f(x)=\frac{(x+1)\alpha^2 x^2}{(1 + \alpha x)^2} - \frac{2\alpha^2}{(1+\alpha)^2},
\end{equation}
where $\alpha=\frac{1}{Z}=(\int_x\max_ip_i(x)\textrm{d}x)^{-1}\leq 1$ and the equality here holds if and only if $p_1=p_2=\dots=p_N$. This function needs to be convex at least for $x\geq 0$ to be used with the $f$-divergence. To show its convexity, we calculate the second derivative of $f$:
\begin{equation}
   f^{\prime\prime}(x)= \frac{2\alpha^2(1 + (3-2\alpha)x)}{(1+\alpha x)^4}.
\end{equation}
Since $\alpha \leq 1$, $f^{\prime\prime}(x)\geq 0$ if $x\geq 0$, namely, $f$ is convex for $x\geq 0$.

\section{Global Optimality of $p\sub{g}=p\sub{max}$ with the Standard GAN}
In addition to our main theoretical results that show the global optimality of $p\sub{g}=p\sub{max}$ with LSGANs, we here prove that the same global optimum can be achieved also for the standard GAN using the binary cross-entropy loss.

As proven in \cite{Goodfellow2014}, the optimal discriminator trained from data-generating distribution $p_i(x)$ with generator's distribution $p\sub{g}(x)$ fixed is given as follows:
\begin{eqnarray}
  D^*_i(x) = \frac{p_i(x)}{p_i(x) + p\sub{g}(x)}.
  \label{opt_D}
\end{eqnarray}
As we showed in Lemma 1 of the main paper, $D^*\sub{max}(x)=\max_i D^*_i(x)$ can then be regarded as the optimal discriminator trained from $p\sub{max}(x)$, namely,
\begin{equation}
    D^*\sub{max}(x)=\frac{p\sub{max}(x)}{p\sub{max}(x) +\alpha p\sub{g}(x)},
\end{equation}
where $\alpha=\frac{1}{Z}$ is a positive constant. On the other hand, the objective function for the generator in \cite{Goodfellow2014}, given $D^*\sub{max}(x)$, can be reformulated as:
\small
\begin{eqnarray}
\mathcal{L}_G=\mathbb{E}_{x\sim p\sub{max}}\left[\log(D^*_{max}(x))\right] + \mathbb{E}_{x\sim p\sub{g}}\left[\log(1 - D^*_{max}(x))\right]\\
  =\int_x\left[p\sub{max}(x) \log\left(\frac{p\sub{max}(x)}{p\sub{max}(x) + \alpha p\sub{g}(x)}\right) \right.\\
  \left.+ p\sub{g}(x)\log\left(\frac{\alpha p\sub{g}(x)}{p\sub{max}(x) + \alpha p\sub{g}(x)}\right)\right]\textrm{d}x.
  \label{problem_G}
\end{eqnarray}
\normalsize

Now, consider the following continuous function $f$:
\begin{equation}
  f(x) = -(1 + x) \log(1 + \alpha x) + x \log(x) + 2\log(1 + \alpha)\ .
\end{equation}
where $f(1)=0$ and its second derivative is:
\begin{equation}
  f^{\prime\prime}(x) = \frac{1 + \alpha^2x}{x(1 + \alpha x)^2}.
\end{equation}
Since $f^{\prime\prime}(x) \geq 0$ if $x\geq 0$, the function $f$ is convex for $x\geq 0$. We introduce the $f$-divergence with this function as follows:
\small
\begin{eqnarray}
  D_{f}(p\mid\mid q) &=&\int_x q(x)f\left(\frac{p(x)}{q(x)}\right) \textrm{d}x\\
  &=& \int_x\left\{q(x)\log\left(\frac{q(x)}{q(x) + \alpha p(x)}\right)\right. \\
  &&\left.+ p(x)\log\left(\frac{\alpha p(x)}{q(x) + \alpha p(x)}\right)\right\}\textrm{d}x + C,
\end{eqnarray}
\normalsize
where $C=2\log(1 + \alpha) - \log(\alpha)$ is a constant. With $D_f$, $\mathcal{L}_G$ in Eq.~(\ref{problem_G}) can be rearranged:
\begin{equation}
    \mathcal{L}_G=D_f(p\sub{g}\mid\mid p\sub{max}) - C,
\end{equation}
which reaches its global minimum if and only if $p\sub{g}=p\sub{max}$.

\section{Implementation Details}
This section presents implementation details of the backbone GANs used in our experiments.

\paragraph{MNIST and Fashion MNIST}
The architecture of the generator was designed as follows. A 128-dimensional noise vector drawn from the normal distribution $\mathcal{N}(0, I)$ was first fed to a fully connected layer with $256\times7\times7$ channels and activated with ReLU~\cite{Nair2010}, which was then reshaped into a feature map sized $7\times 7$ and with $256$ channels. This feature map was then deconvoluted using two consecutive 2D deconvolution layers with the kernel size of $4$, the stride of $2$, and the channels of $128$ (first layer) and $64$ (second layer), both of which were batch-normalized with the momentum of $0.1$ and activated with ReLU. Finally, one more deconvolution layer with the kernel size of $3$, the stride of $1$, and the channel of $1$, which was activated by the hyperbolic tangent, was applied to obtain gray-scale images of the size $28\times 28$. The discriminator that received gray-scale images with the size of $28\times 28$ consisted of four consecutive convolution layers, which all had the kernel size of $3$, the stride of $2$, and the channels of $[32, 64, 128, 256]$, followed by spectral normalization~\cite{Miyato2018} and LeakyReLU activation ($\alpha=0.2)$~\cite{Xu2015}. Zero-padding was applied before the second convolutional filter to down-scale feature maps properly in the subsequent convolutions. Finally, the feature maps were flattened and fed into a fully connected layer with one-dimensional output followed by spectral normalization and linear activation.

\paragraph{CINIC-10}
Similar to the architecture shown above, the generator first fed a 128-dimensional noise vector drawn from $\mathcal{N}(0, I)$ to a fully connected layer with $512\times 4\times 4$ channels and the ReLU activation. The output was reshaped into a feature map sized $4\times 4$ and with $512$ channels, and then fed to three consecutive 2D deconvolution layers with the kernel size of $4$, the stride of $2$, and the channels of $[256, 128, 64]$. Each deconvolution layer was followed by the batch normalization with the momentum of $0.1$ and the ReLU activation. One more deconvolution layer with the kernel size of $3$, the stride of $1$, and the channel of $3$, which was activated by the hyperbolic tangent, was applied finally to obtain colored images of the size $32\times 32$. The discriminator consisted of five convolution layers with the channels of $[64, 64, 128, 128, 256]$, the kernel size of $[3, 4, 3, 4, 4]$, and the stride of $[1, 2, 1, 2, 2]$, respectively. Each convolution layer was followed by spectral normalization and leaky ReLU with $\alpha=0.1$, and the output was flattened and fed to a fully connected layer with a single channel with spectral normalization and linear activation.

\balance
\section{Qualitative Results}
Finally, we show some examples of generated images in Figure~\ref{fig:qual}. We found i) lower quality images with MD-GAN; and ii) biased outputs (\eg, many `1's generated) with GMAN* and GMAN-0, while iii) F2U and F2A did not provide such major issues. 

\begin{figure*}[t]
\begin{minipage}{0.33\linewidth}
    \includegraphics[width=\linewidth]{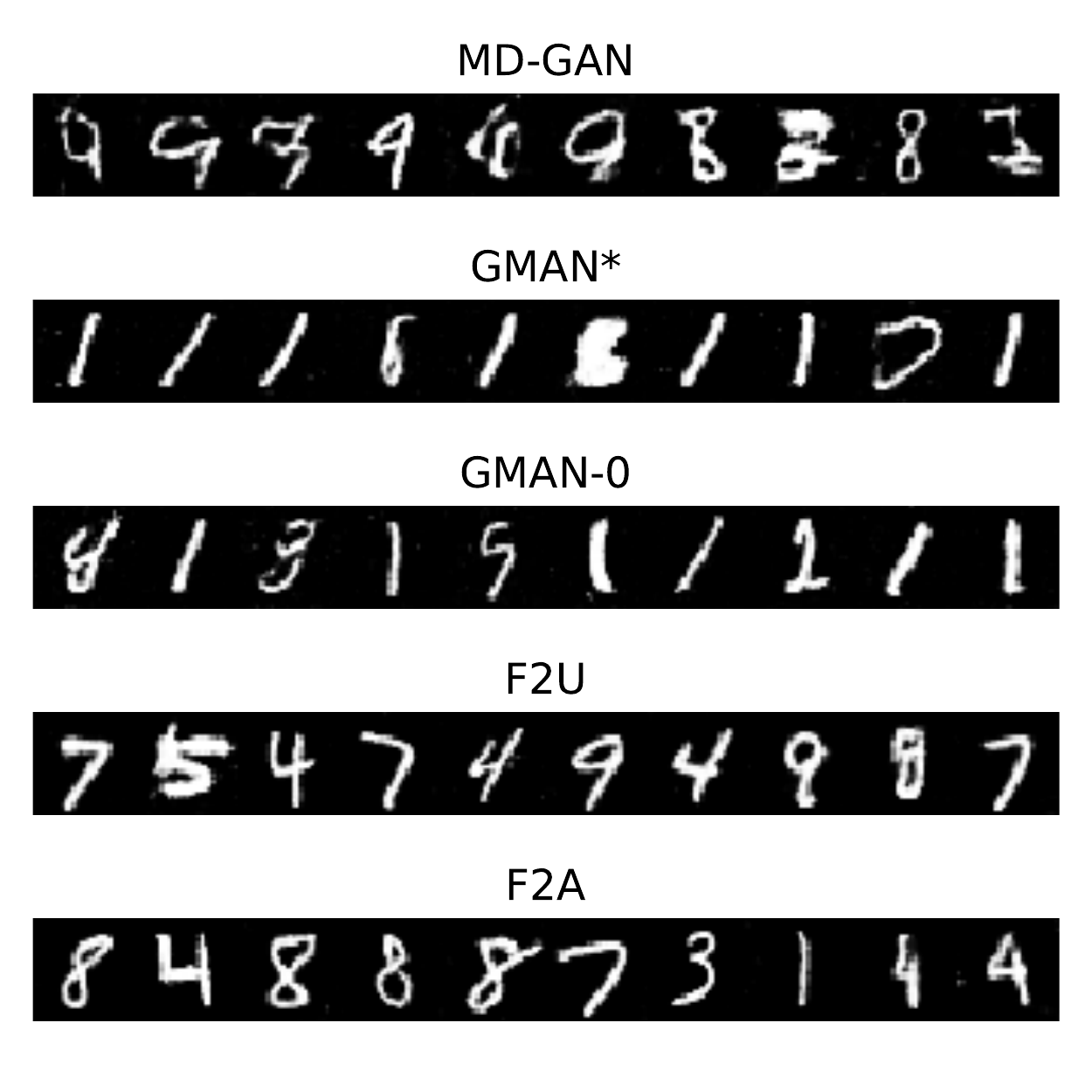}
    \subcaption{MNIST (Non-Ovl)}
    \end{minipage}
\begin{minipage}{0.33\linewidth}
    \includegraphics[width=\linewidth]{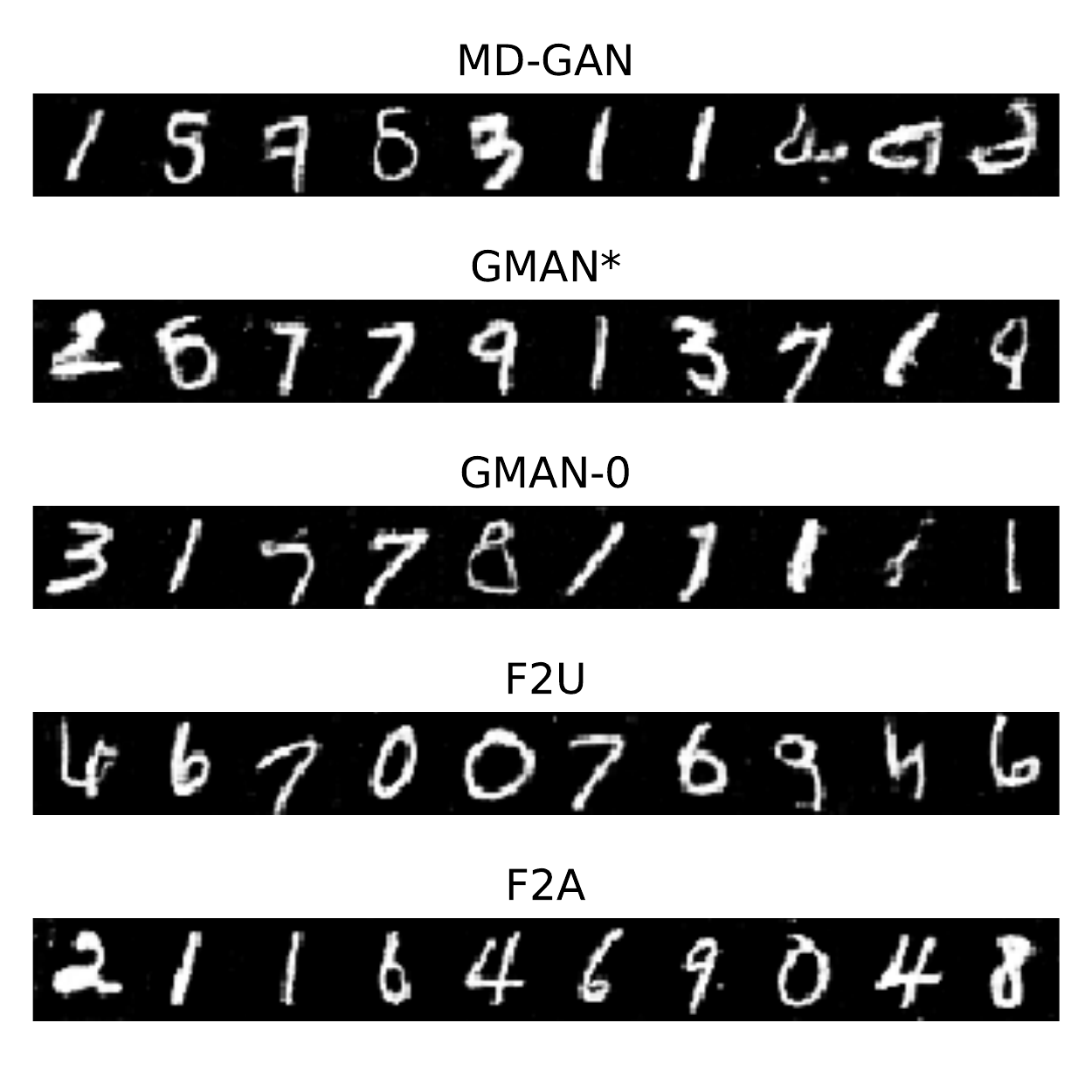}
    \subcaption{MNIST (Mod-Ovl)}
    \end{minipage}
\begin{minipage}{0.33\linewidth}
    \includegraphics[width=\linewidth]{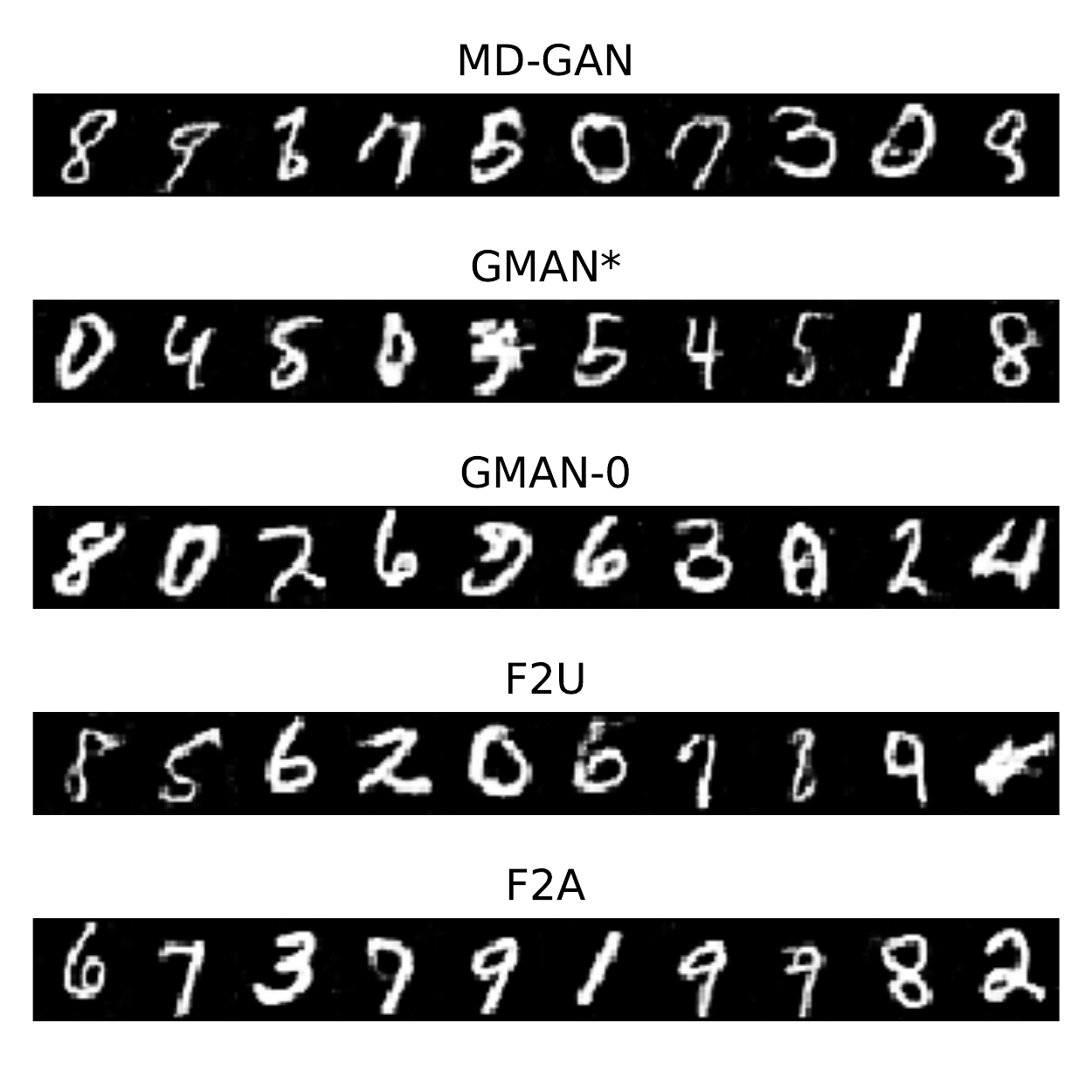}
    \subcaption{MNIST (Full-Ovl)}
    \end{minipage}
\begin{minipage}{0.33\linewidth}
    \includegraphics[width=\linewidth]{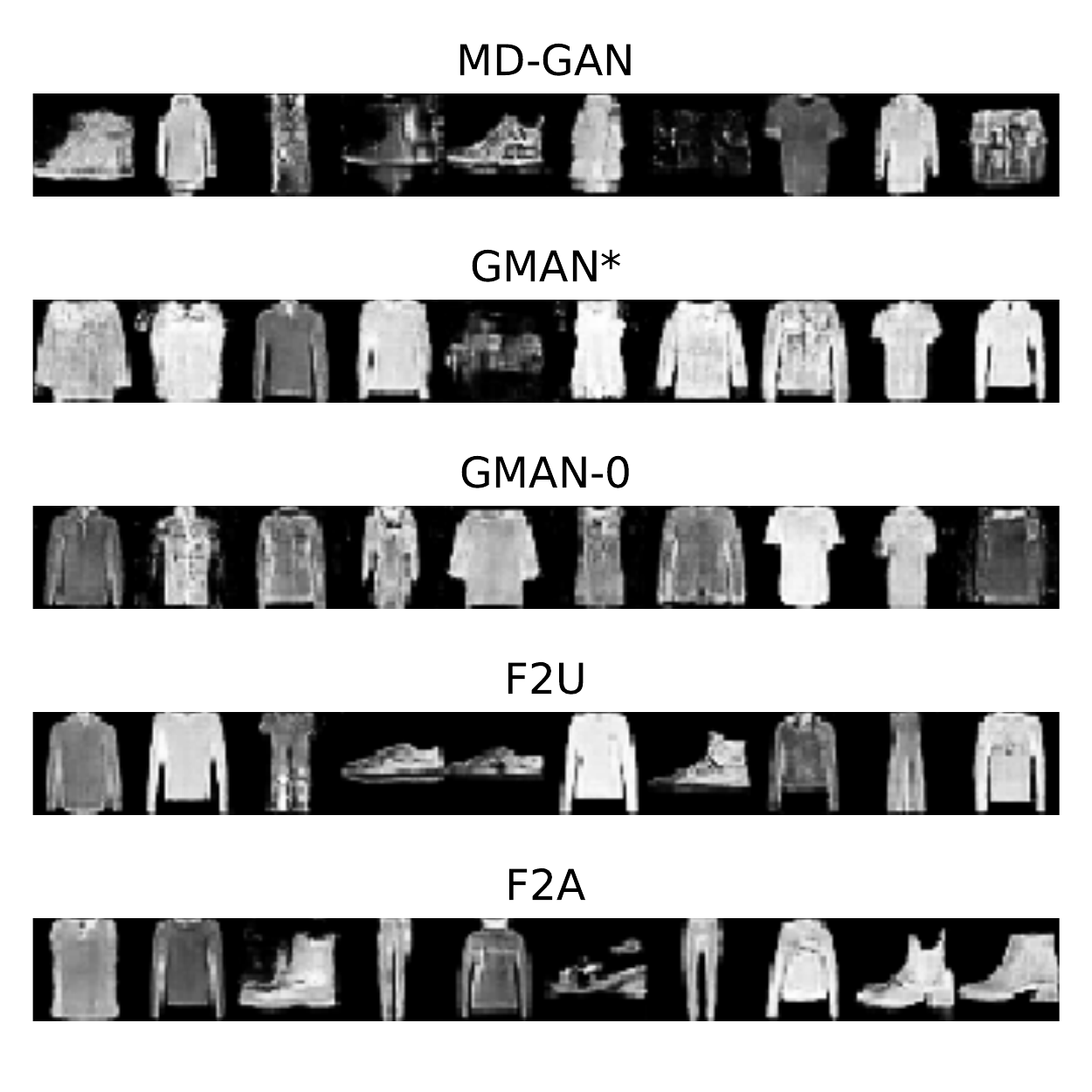}
    \subcaption{Fashion MNIST (Non-Ovl)}
    \end{minipage}
\begin{minipage}{0.33\linewidth}
    \includegraphics[width=\linewidth]{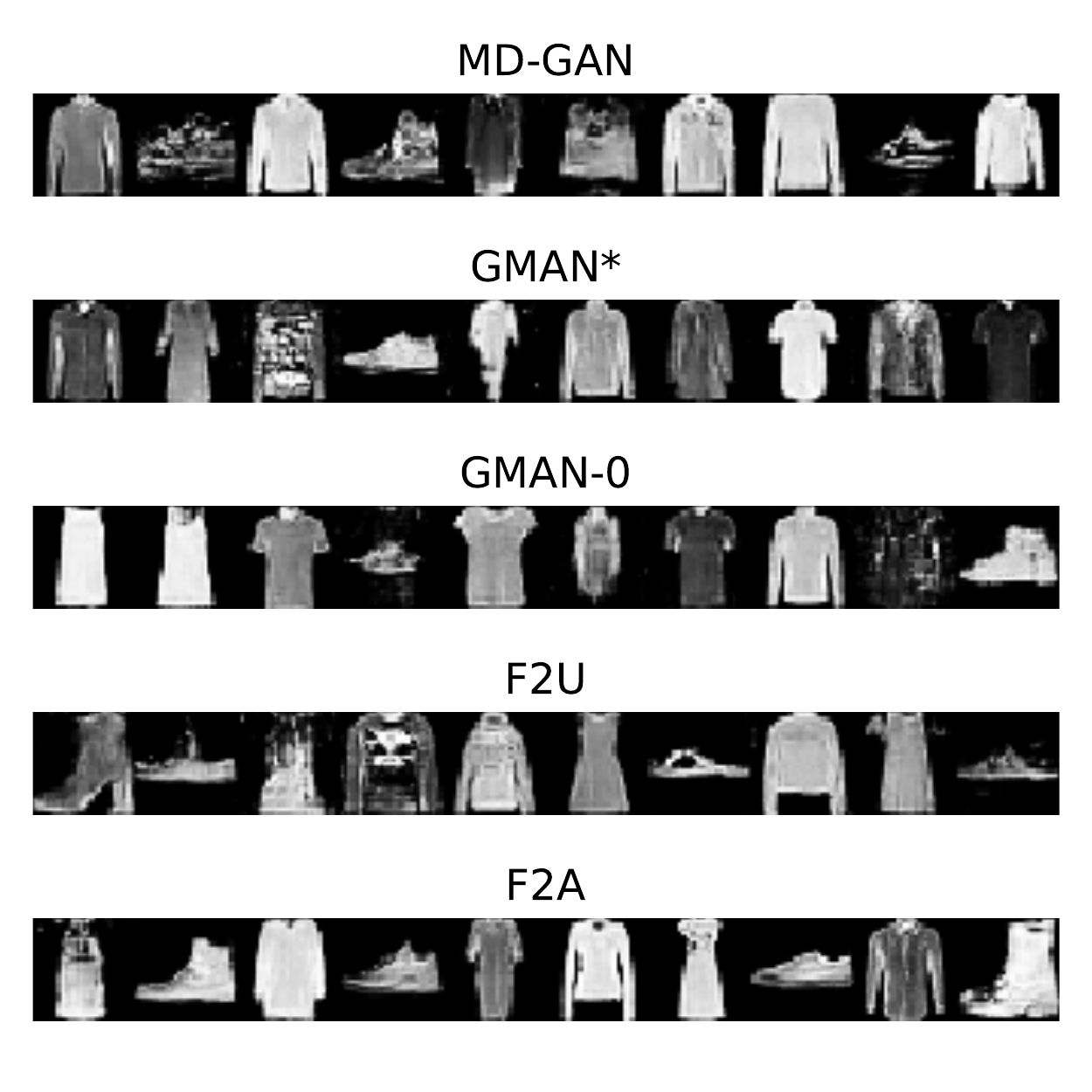}
    \subcaption{Fashion MNIST (Mod-Ovl)}
    \end{minipage}
\begin{minipage}{0.33\linewidth}
    \includegraphics[width=\linewidth]{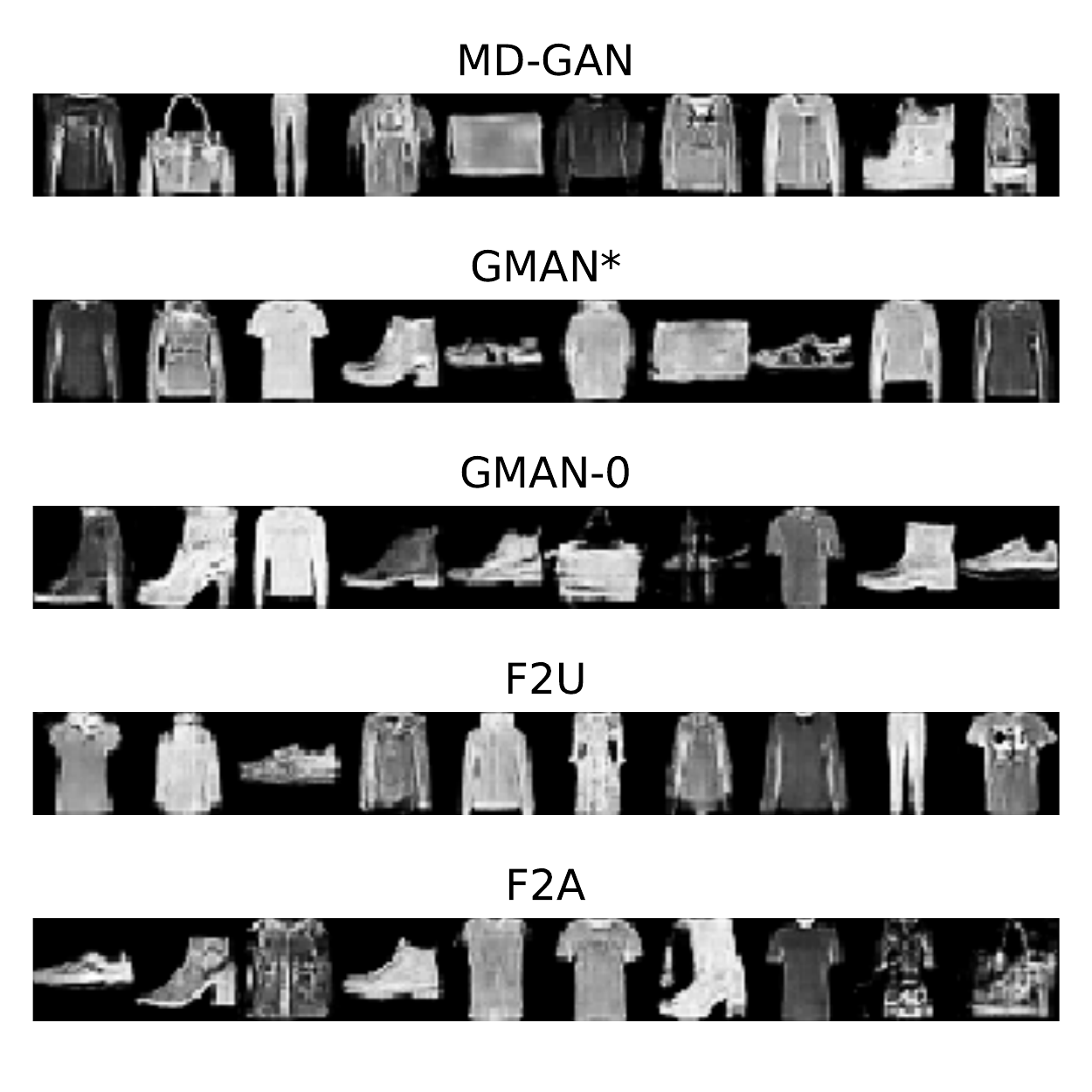}
    \subcaption{Fashion MNIST (Full-Ovl)}
    \end{minipage}
\begin{minipage}{0.33\linewidth}
    \includegraphics[width=\linewidth]{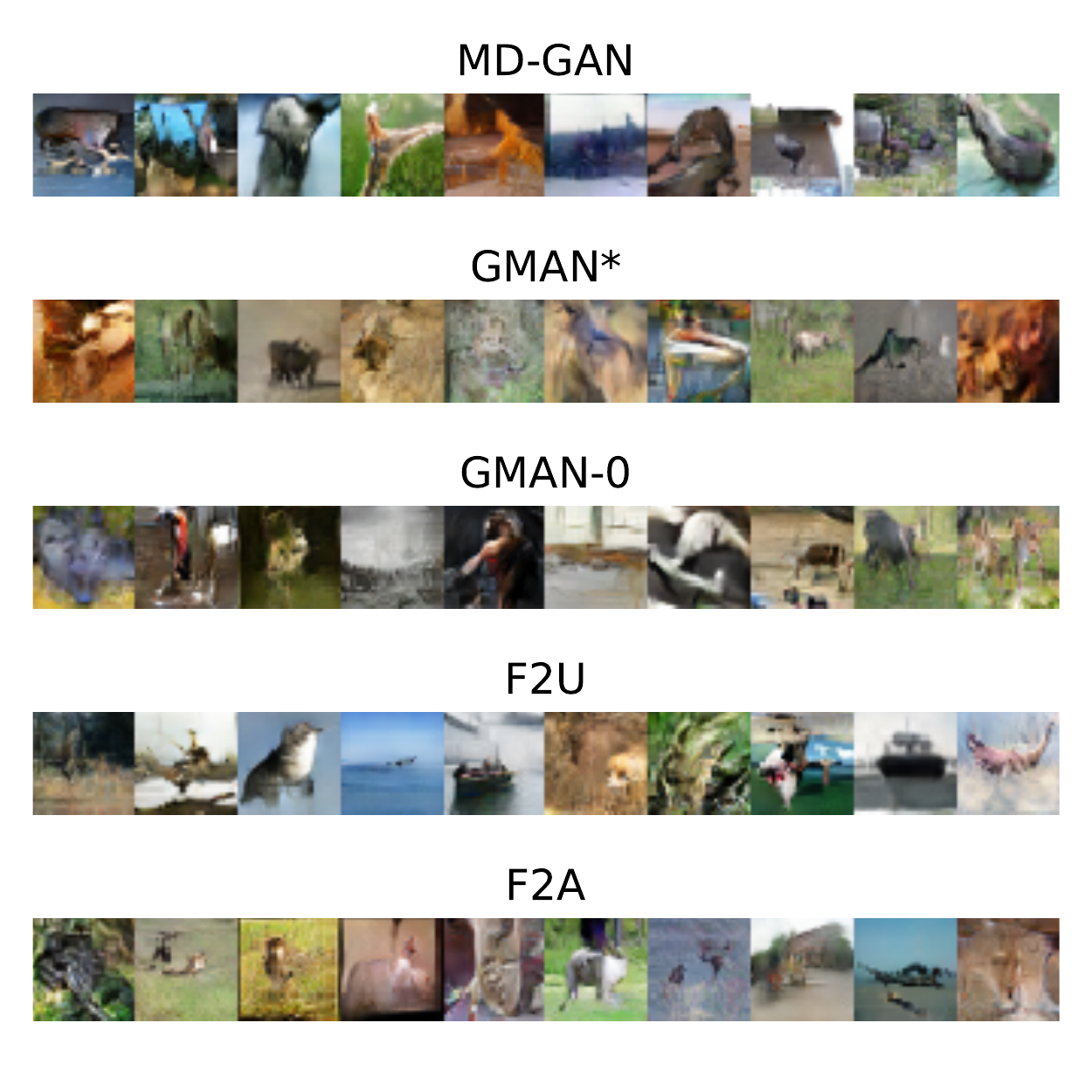}
    \subcaption{CINIC-10 (Non-Ovl)}
    \end{minipage}
\begin{minipage}{0.33\linewidth}
    \includegraphics[width=\linewidth]{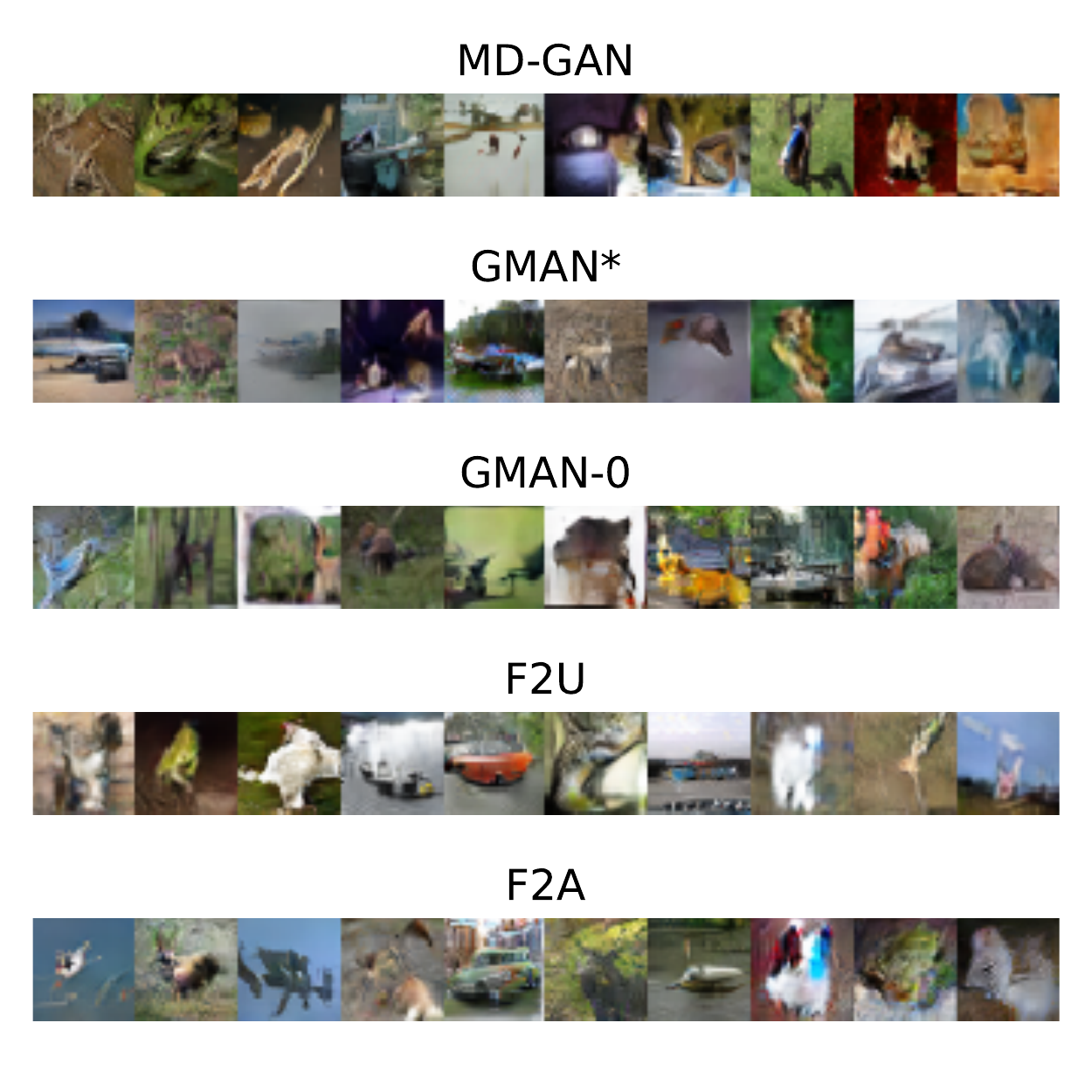}
    \subcaption{CINIC-10 (Mod-Ovl)}
    \end{minipage}
\begin{minipage}{0.33\linewidth}
    \includegraphics[width=\linewidth]{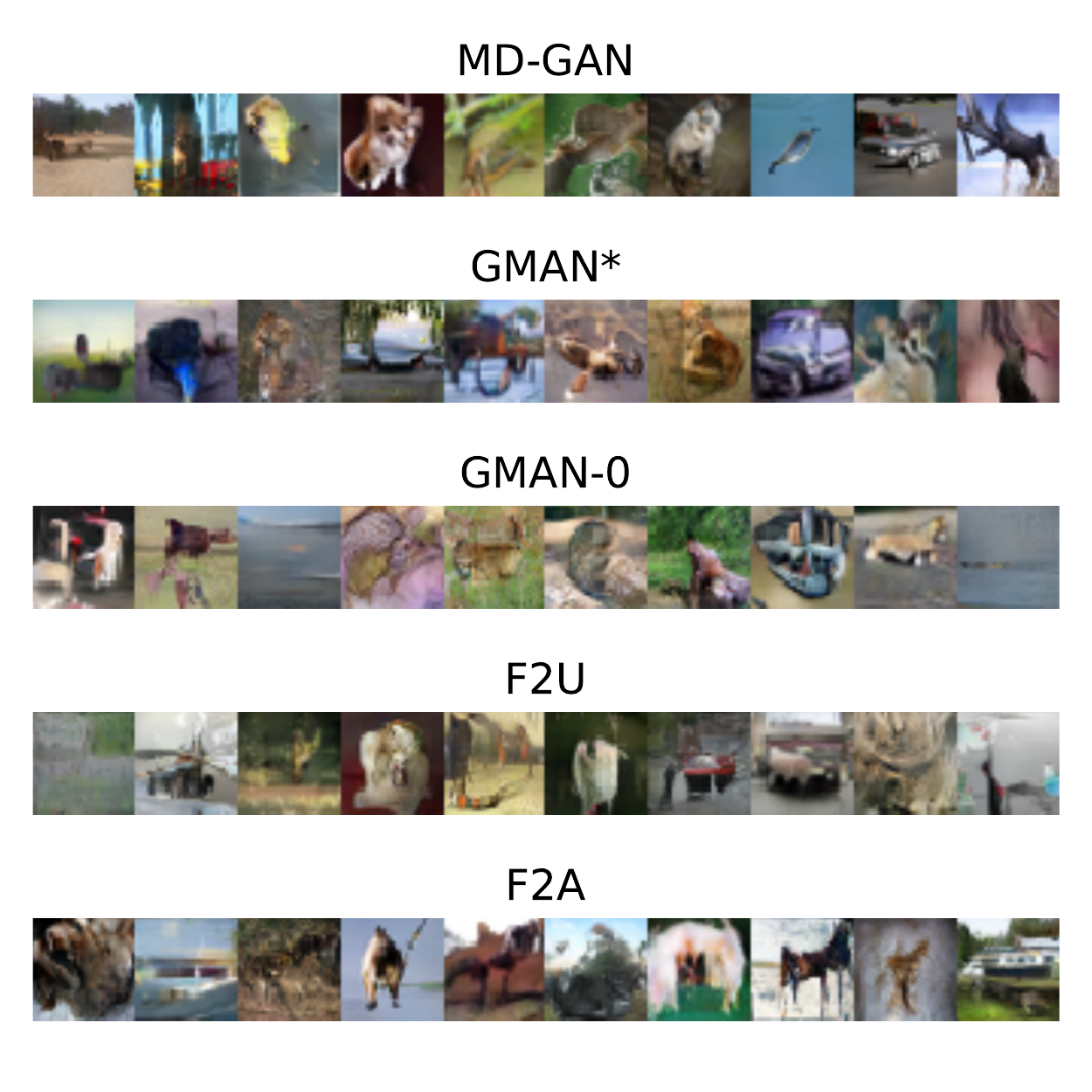}
    \subcaption{CINIC-10 (Full-Ovl)}
    \end{minipage}
    \caption{Qualitative Results}
    \label{fig:qual}
\end{figure*}

{\small
\bibliographystyle{ieee_fullname}

\begin{thebibliography}{10}\itemsep=-1pt

\bibitem{Sami2016}
Sami Abu{-}El{-}Haija, Nisarg Kothari, Joonseok Lee, Paul Natsev, George
  Toderici, Balakrishnan Varadarajan, and Sudheendra Vijayanarasimhan.
\newblock {YouTube-8M: {A} Large-Scale Video Classification Benchmark}.
\newblock {\em Computing Research Repository}, abs/1609.08675, 2016.

\bibitem{Bagdasaryan2018}
Eugene Bagdasaryan, Andreas Veit, Yiqing Hua, Deborah Estrin, and Vitaly
  Shmatikov.
\newblock {How To Backdoor Federated Learning}.
\newblock {\em Computing Research Repository}, abs/1807.00459, 2018.

\bibitem{Tal2018}
Tal Ben{-}Nun and Torsten Hoefler.
\newblock {Demystifying Parallel and Distributed Deep Learning: An In-Depth
  Concurrency Analysis}.
\newblock {\em Computing Research Repository}, abs/1802.09941, 2018.

\bibitem{Bonawitz2019}
Keith Bonawitz, Hubert Eichner, Wolfgang Grieskamp, Dzmitry Huba, Alex
  Ingerman, Vladimir Ivanov, Chlo{\'{e}} Kiddon, Jakub Konecn{\'{y}}, Stefano
  Mazzocchi, H.~Brendan McMahan, Timon~Van Overveldt, David Petrou, Daniel
  Ramage, and Jason Roselander.
\newblock {Towards Federated Learning at Scale: System Design}.
\newblock {\em Computing Research Repository}, abs/1902.01046, 2019.

\bibitem{Bonawitz2017}
Keith Bonawitz, Vladimir Ivanov, Ben Kreuter, Antonio Marcedone, H.~Brendan
  McMahan, Sarvar Patel, Daniel Ramage, Aaron Segal, and Karn Seth.
\newblock {Practical Secure Aggregation for Privacy-Preserving Machine
  Learning}.
\newblock In {\em Proceedings of the ACM SIGSAC Conference on Computer and
  Communications Security}, pages 1175--1191, 2017.

\bibitem{Chavdarova2018}
Tatjana Chavdarova and François Fleuret.
\newblock {SGAN: An Alternative Training of Generative Adversarial Networks}.
\newblock In {\em Proceedings of the IEEE Conference on Computer Vision and
  Pattern Recognition}, 2018.

\bibitem{Ching2018}
T. Ching, D.~S. Himmelstein, B.~K. Beaulieu-Jones, A.~A. Kalinin, B.~T. Do,
  G.~P. Way, E. Ferrero, P.~M. Agapow, M. Zietz, M.~M. Hoffman, W. Xie, G.~L.
  Rosen, B.~J. Lengerich, J. Israeli, J. Lanchantin, S. Woloszynek, A.~E.
  Carpenter, A. Shrikumar, J. Xu, E.~M. Cofer, C.~A. Lavender, S.~C. Turaga,
  A.~M. Alexandari, Z. Lu, D.~J. Harris, D. DeCaprio, Y. Qi, A. Kundaje, Y.
  Peng, L.~K. Wiley, M.~H.~S. Segler, S.~M. Boca, S.~J. Swamidass, A. Huang, A.
  Gitter, and C.~S. Greene.
\newblock {{O}pportunities and Obstacles for Deep Learning in Biology and
  Medicine}.
\newblock {\em Journal of The Royal Society Interface}, 15(141), 2018.

\bibitem{Choi2018}
Yunjey Choi, Minje Choi, Munyoung Kim, Jung-Woo Ha, Sunghun Kim, and Jaegul
  Choo.
\newblock {StarGAN: Unified Generative Adversarial Networks for Multi-Domain
  Image-to-Image Translation}.
\newblock In {\em Proceedings of the IEEE Conference on Computer Vision and
  Pattern Recognition}, 2018.

\bibitem{Chowdhury2016}
Soumyadeb Chowdhury, Md~Sadek Ferdous, and Joemon~M Jose.
\newblock {Exploring Lifelog Sharing and Privacy}.
\newblock In {\em Proceedings of the ACM International Joint Conference on
  Pervasive and Ubiquitous Computing}, pages 553--558, 2016.

\bibitem{Darlow2018}
Luke~Nicholas Darlow, Elliot~J. Crowley, Antreas Antoniou, and Amos~J. Storkey.
\newblock {CINIC-10} is not imagenet or {CIFAR-10}.
\newblock {\em Computing Research Repository}, abs/1810.03505, 2018.

\bibitem{Durugkar2016}
Ishan~P. Durugkar, Ian Gemp, and Sridhar Mahadevan.
\newblock {Generative Multi-Adversarial Networks}.
\newblock In {\em Proceedings of the International Conference on Learning
  Representations}, 2016.

\bibitem{Ghosh2018}
Arnab Ghosh, Viveka Kulharia, Vinay~P. Namboodiri, Philip~H.S. Torr, and
  Puneet~K. Dokania.
\newblock {Multi-Agent Diverse Generative Adversarial Networks}.
\newblock In {\em Proceedings of the IEEE Conference on Computer Vision and
  Pattern Recognition}, 2018.

\bibitem{Giannakis2016}
Georgios~B. Giannakis, Qing Ling, Gonzalo Mateos, Ioannis~D. Schizas, and Hao
  Zhu.
\newblock {\em {Decentralized Learning for Wireless Communications and
  Networking}}, pages 461--497.
\newblock Springer International Publishing, 2016.

\bibitem{Goodfellow2014}
Ian Goodfellow, Jean Pouget-Abadie, Mehdi Mirza, Bing Xu, David Warde-Farley,
  Sherjil Ozair, Aaron Courville, and Yoshua Bengio.
\newblock {Generative Adversarial Nets}.
\newblock In {\em Proceedings of the Advances in Neural Information Processing
  Systems}, pages 2672--2680, 2014.

\bibitem{Gulrajani2017}
Ishaan Gulrajani, Faruk Ahmed, Martin Arjovsky, Vincent Dumoulin, and Aaron~C
  Courville.
\newblock {Improved Training of Wasserstein GANs}.
\newblock In I. Guyon, U.~V. Luxburg, S. Bengio, H. Wallach, R. Fergus, S.
  Vishwanathan, and R. Garnett, editors, {\em Proceedings of the Advances in
  Neural Information Processing Systems}, pages 5767--5777, 2017.

\bibitem{Hardy2018}
Corentin Hardy, Erwan~Le Merrer, and Bruno Sericola.
\newblock {MD-GAN: Multi-Discriminator Generative Adversarial Networks for
  Distributed Datasets}.
\newblock {\em Computing Research Repository}, abs/1811.03850, 2018.

\bibitem{He2016}
Kaiming He, Xiangyu Zhang, Shaoqing Ren, and Jian Sun.
\newblock {Deep Residual Learning for Image Recognition}.
\newblock In {\em Proceedings of the IEEE Conference on Computer Vision and
  Pattern Recognition}, pages 171--180, 2016.

\bibitem{Heusel2017}
Martin Heusel, Hubert Ramsauer, Thomas Unterthiner, Bernhard Nessler, and Sepp
  Hochreiter.
\newblock {GANs Trained by a Two Time-Scale Update Rule Converge to a Local
  Nash Equilibrium}.
\newblock In {\em Proceedings of the Advances in Neural Information Processing
  Systems}, pages 6626--6637, 2017.

\bibitem{Ioffe2015}
Sergey Ioffe and Christian Szegedy.
\newblock {Batch Normalization: Accelerating Deep Network Training by Reducing
  Internal Covariate Shift}.
\newblock In {\em Proceedings of the International Conference on International
  Conference on Machine Learning}, pages 448--456, 2015.

\bibitem{Isola2017}
P. Isola, J. Zhu, T. Zhou, and A.~A. Efros.
\newblock {Image-to-Image Translation with Conditional Adversarial Networks}.
\newblock In {\em Proceedings of the IEEE Conference on Computer Vision and
  Pattern Recognition}, 2017.

\bibitem{Jeong2018}
Eunjeong Jeong, Seungeun Oh, Hyesung Kim, Jihong Park, Mehdi Bennis, and
  Seong-Lyun Kim.
\newblock {Communication-Efficient On-Device Machine Learning: Federated
  Distillation and Augmentation under Non-IID Private Data}.
\newblock {\em Computing Research Repository}, abs/1811.11479, 2018.

\bibitem{Jiang2017}
Zhanhong Jiang, Aditya Balu, Chinmay Hegde, and Soumik Sarkar.
\newblock {Collaborative Deep Learning in Fixed Topology Networks}.
\newblock In {\em Proceedings of the Advances in Neural Information Processing
  Systems}, pages 5904--5914, 2017.

\bibitem{Kingma2014}
Diederik~P. Kingma and Jimmy Ba.
\newblock {Adam: {A} Method for Stochastic Optimization}.
\newblock In {\em Proceedings of the International Conference on Learning
  Representations}, 2015.

\bibitem{Konecny2016}
Jakub Konecn{\'{y}}, H.~Brendan McMahan, Felix~X. Yu, Peter Richt{\'{a}}rik,
  Ananda~Theertha Suresh, and Dave Bacon.
\newblock {Federated Learning: Strategies for Improving Communication
  Efficiency}.
\newblock In {\em Proceedings of the NIPS Workshop on Private Multi-Party
  Machine Learning}, 2016.

\bibitem{Li2005}
Mingyan Li, Radha Poovendran, and Sreeram Narayanan.
\newblock {Protecting Patient Privacy against Unauthorized Release of Medical
  Images in a Group Communication Environment}.
\newblock {\em Computerized Medical Imaging and Graphics}, 29(5):367 -- 383,
  2005.

\bibitem{Lian2017}
Xiangru Lian, Ce Zhang, Huan Zhang, Cho-Jui Hsieh, Wei Zhang, and Ji Liu.
\newblock {Can Decentralized Algorithms Outperform Centralized Algorithms? A
  Case Study for Decentralized Parallel Stochastic Gradient Descent}.
\newblock In I. Guyon, U.~V. Luxburg, S. Bengio, H. Wallach, R. Fergus, S.
  Vishwanathan, and R. Garnett, editors, {\em Proceedings of the Advances in
  Neural Information Processing Systems 30}, pages 5330--5340, 2017.

\bibitem{Lin2017}
Yujun Lin, Song Han, Huizi Mao, Yu Wang, and William~J. Dally.
\newblock {Deep Gradient Compression: Reducing the Communication Bandwidth for
  Distributed Training}.
\newblock In {\em Proceedings of the International Conference on Learning
  Representations}, 2018.

\bibitem{Liu2016}
Ming-Yu Liu and Oncel Tuzel.
\newblock {Coupled Generative Adversarial Networks}.
\newblock In {\em Proceedings of the Advances in Neural Information Processing
  Systems}, pages 469--477, 2016.

\bibitem{Lucic2018}
Mario Lucic, Karol Kurach, Marcin Michalski, Sylvain Gelly, and Olivier
  Bousquet.
\newblock {Are GANs Created Equal? A Large-Scale Study}.
\newblock In {\em Proceedings of the Advances in Neural Information Processing
  Systems}, pages 698--707, 2018.

\bibitem{Mao2017}
X. Mao, Q. Li, H. Xie, R.~Y.~K. Lau, Z. Wang, and S.~P. Smolley.
\newblock {Least Squares Generative Adversarial Networks}.
\newblock In {\em Proceedings of the IEEE International Conference on Computer
  Vision}, pages 2813--2821, 2017.

\bibitem{Mcmahan2017}
H.~Brendan McMahan, Eider Moore, Daniel Ramage, Seth Hampson, and Blaise~Aguera
  y Arcas.
\newblock {Communication-Efficient Learning of Deep Networks from Decentralized
  Data}.
\newblock In {\em Proceedings of the International Conference on Artificial
  Intelligence and Statistics}, 2017.

\bibitem{Mirza2014}
Mehdi Mirza and Simon Osindero.
\newblock {Conditional Generative Adversarial Nets}.
\newblock {\em Computing Research Repository}, abs/1411.1784, 2014.

\bibitem{Miyato2018}
Takeru Miyato, Toshiki Kataoka, Masanori Koyama, and Yuichi Yoshida.
\newblock {Spectral Normalization for Generative Adversarial Networks}.
\newblock In {\em Proceedings of the International Conference on Learning
  Representations}, 2018.

\bibitem{Miyato2018b}
Takeru Miyato and Masanori Koyama.
\newblock {cGANs with Projection Discriminator}.
\newblock In {\em Proceedings of the International Conference on Learning
  Representations}, 2018.

\bibitem{Nair2010}
Vinod Nair and Geoffrey~E. Hinton.
\newblock {Rectified Linear Units Improve Restricted Boltzmann Machines}.
\newblock In {\em Proceedings of the International Conference on Machine
  Learning}, pages 807--814, 2010.

\bibitem{Nguyen2017}
Tu Nguyen, Trung Le, Hung Vu, and Dinh Phung.
\newblock {Dual Discriminator Generative Adversarial Nets}.
\newblock In {\em Proceedings of the Advances in Neural Information Processing
  Systems}, pages 2670--2680, 2017.

\bibitem{Odena2017}
Augustus Odena, Christopher Olah, and Jonathon Shlens.
\newblock {Conditional Image Synthesis with Auxiliary Classifier {GAN}s}.
\newblock In {\em {Proceedings of the International Conference on Machine
  Learning}}, pages 2642--2651, 2017.

\bibitem{Radford2015}
Alec Radford, Luke Metz, and Soumith Chintala.
\newblock {Unsupervised Representation Learning with Deep Convolutional
  Generative Adversarial Networks}.
\newblock In {\em Proceedings of the International Conference on Learning
  Representations}, 2016.

\bibitem{Reed2016}
Scott Reed, Zeynep Akata, Xinchen Yan, Lajanugen Logeswaran, Bernt Schiele, and
  Honglak Lee.
\newblock {Generative Adversarial Text to Image Synthesis}.
\newblock In {\em Proceedings of International Conference on Machine Learning},
  pages 1060--1069, 2016.

\bibitem{Russakovsky2015a}
Olga Russakovsky, Jia Deng, Hao Su, Jonathan Krause, Sanjeev Satheesh, Sean Ma,
  Zhiheng Huang, Andrej Karpathy, Aditya Khosla, Michael Bernstein,
  Alexander~C. Berg, and Li Fei-Fei.
\newblock {ImageNet Large Scale Visual Recognition Challenge}.
\newblock {\em International Journal of Computer Vision}, 115(3):211--252,
  2015.

\bibitem{Schlegl2017}
Thomas Schlegl, Philipp Seeb{\"o}ck, Sebastian~M. Waldstein, Ursula
  Schmidt-Erfurth, and Georg Langs.
\newblock {Unsupervised Anomaly Detection with Generative Adversarial Networks
  to Guide Marker Discovery}.
\newblock In Marc Niethammer, Martin Styner, Stephen Aylward, Hongtu Zhu, Ipek
  Oguz, Pew-Thian Yap, and Dinggang Shen, editors, {\em Proceedings of the
  Information Processing in Medical Imaging}, pages 146--157, 2017.

\bibitem{Wang2018}
Shiqiang Wang, Tiffany Tuor, Theodoros Salonidis, Kin~K. Leung, Christian
  Makaya, Ting He, and Kevin Chan.
\newblock {When Edge Meets Learning: Adaptive Control for Resource-Constrained
  Distributed Machine Learning}.
\newblock In {\em Proceedings of the IEEE International Conference on Computer
  Communications}, 2018.

\bibitem{Wen2017}
Wei Wen, Cong Xu, Feng Yan, Chunpeng Wu, Yandan Wang, Yiran Chen, and Hai Li.
\newblock {TernGrad: Ternary Gradients to Reduce Communication in Distributed
  Deep Learning}.
\newblock In {\em Proceedings of the Advances in Neural Information Processing
  Systems}, pages 1508--1518, 2017.

\bibitem{Xiao2017}
Han Xiao, Kashif Rasul, and Roland Vollgraf.
\newblock {Fashion-MNIST: a Novel Image Dataset for Benchmarking Machine
  Learning Algorithms}.
\newblock {\em Computing Research Repository}, abs/1708.07747, 2017.

\bibitem{Xu2015}
Bing Xu, Naiyan Wang, Tianqi Chen, and Mu Li.
\newblock {Empirical Evaluation of Rectified Activations in Convolutional
  Network}.
\newblock {\em Computing Research Repository}, abs/1505.00853, 2015.

\bibitem{Zhu2017}
Jun-Yan Zhu, Taesung Park, Phillip Isola, and Alexei~A Efros.
\newblock {Unpaired Image-to-Image Translation using Cycle-Consistent
  Adversarial Networks}.
\newblock In {\em Proceedings of the IEEE International Conference on Computer
  Vision}, 2017.

\end{thebibliography}

}

\end{document}